\newsavebox{\algleft}
\newsavebox{\algright}
\newsavebox{\mdpfigright}
\DeclareMathOperator*{\argmax}{arg\,max}
\newcommand\numberthis{\addtocounter{equation}{1}\tag{\theequation}}
\newcommand{\h}{h}
\newcommand{\traj}{\tau}
\newcommand{\horizon}{H}
\newcommand{\subrl}{\textsc{\small{SubRL}}\xspace}
\newcommand{\subPOh}{\textsc{SubPO}\xspace}
\newcommand{\cmp}{\textsc{\small{CMP}}\xspace}
\newcommand{\mv}{\textsc{\tiny{M}}\xspace}
\newcommand{\nm}{\textsc{\tiny{NM}}\xspace}
\newcommand{\mrl}{\textsc{\small{ModPO}}\xspace}
\newcommand{\mP}{\mathcal{P}}
\newcommand{\NP}{\textsc{\small{NP}}\xspace}
\newcommand{\ZTIME}{\textsc{\small{ZTIME}}\xspace}
\newcommand{\sop}{\textsc{\small{SOP}}\xspace}
\newcommand{\gst}{\textsc{\small{GST}}\xspace}
\newcommand{\scp}{\textsc{\small{SCP}}\xspace}
\newcommand{\frank}{\textsc{\small{Frank-Wolfe}}\xspace}
\newcommand{\subPO}{\textsc{\small{SubPO}}\xspace}
\newcommand{\subPOm}{\textsc{\small{SubPO-M}}\xspace}
\newcommand{\subPOnm}{\textsc{\small{SubPO-NM}}\xspace}
\newcommand{\PG}{\textsc{\small{PG}}\xspace}
\newcommand{\RL}{\textsc{\small{RL}}\xspace}
\newcommand{\GAE}{GAE}
\newcommand{\mypar}[1]{\noindent\textbf{#1}.}
\newcommand*{\eq}{Eq.}
\newcommand{\MDPs}{\textsc{\small{MDP}s}\xspace}
\newcommand{\mdp}{\textsc{\small{MDP}}\xspace}
\newcommand{\smdp}{\textsc{\small{SMDP}}\xspace}
\newcommand{\BigO}{\mathcal{O}}
\newcommand{\A}{\mathcal A}
\newcommand{\X}{\mathcal X}
\newcommand{\V}{\mathcal V}
\newcommand{\M}{\mathcal M}
\renewcommand{\S}{\mathcal S}
\newcommand{\E}{\mathop{\mathbb{E}}}
\newcommand{\R}{\mathbb{R}}
\newtheorem{lemma}{Lemma}
\newtheorem{definition}{Definition}
\newtheorem{theorem*}{Theorem}
\newcommand{\Discat}[1][]{D^{#1}}
\newcommand{\Obj}{F}
\newcommand{\Domain}{V}
\newcommand{\density}{\rho}
\newcommand{\opt}{\textsc{\small{OPT}}\xspace}
\newcommand{\algo}{\textsc{\small{ALG}}\xspace}
\def\thanks#1{\protected@xdef\@thanks{\@thanks
        \protect\footnotetext{#1}}}
\title{Submodular Reinforcement Learning}
\author{%
  Manish Prajapat\thanks{Code available at \url{https://github.com/manish-pra/non-additive-RL}}  \\
%   ETH AI Center\\
  ETH Zurich\\
  % \texttt{manishp@ethz.ch} \\
  % examples of more authors
   \And
   Mojm\'ir Mutn\'y \\
%   Dept. of Computer Science\\
% Department of Computer Science\\
   ETH Zurich \\
   % \texttt{mmutny@ethz.ch} \\
   \And
   Melanie N. Zeilinger \\
%   Institute of Dynamic Systems \& Control \\
% IDSC\\
   ETH Zurich \\
   % \texttt{mzeilinger@ethz.ch} \\
   \And
   Andreas Krause \\
%   Dept. of Computer Science \\
% Department of Computer Science\\
   ETH Zurich \\
   % \texttt{krausea@ethz.ch} \\
  % \And
  % Coauthor \\
  % Affiliation \\
  % Address \\
  % \texttt{email} \\
}
\newcommand{\rev}[1]{\textcolor{black}{#1}}
\begin{document}

\maketitle

\begin{abstract}
In reinforcement learning (\RL), rewards of states are typically considered additive, and following the Markov assumption, they are {\em independent} of states visited previously.  In many important applications, such as coverage control, experiment design and informative path planning, rewards naturally have diminishing returns, i.e., their value decreases in light of similar states visited previously. To tackle this, we propose {\em submodular RL} (\subrl), a paradigm which seeks to optimize more general, non-additive (and history-dependent) rewards modelled via submodular set functions which capture diminishing returns. Unfortunately, in general, even in tabular settings, we show that the resulting optimization problem is hard to approximate. On the other hand, motivated by the success of greedy algorithms in classical submodular optimization, we propose \subPO, a simple policy gradient-based algorithm for \subrl that handles non-additive rewards by greedily maximizing marginal gains. Indeed, under some assumptions on the underlying Markov Decision Process (\mdp), \subPO recovers optimal constant factor approximations of submodular bandits. Moreover, we derive a natural policy gradient approach for locally optimizing \subrl instances even in large state- and action- spaces. 
We showcase the versatility of our approach by applying \subPO to several applications such as biodiversity monitoring, Bayesian experiment design, informative path planning, and coverage maximization. Our results demonstrate sample efficiency, as well as scalability to high-dimensional state-action spaces. 
 \end{abstract}

\vspace{-4mm}
\section{Introduction}\vspace{-1mm}
In reinforcement learning (\RL), the agent aims to learn a policy by interacting with its environment 
%, receives rewards and learns a policy to maximize the cumulative sum of the 
in order to maximize rewards obtained. Typically, in \RL, the environments are modelled as a (controllable) Markov chain, and the rewards are considered {\em additive} and {\em independent of the trajectory}. In this well-understood setting, referred to as Markov Decision Processes (\mdp), the Bellman optimality principle allows to find an optimal policy 
%gives the optimal algorithm that runs 
in polynomial time for finite Markov chains \citep{puterman_MDPbook, sutton2018reinforcement}. However, many interesting problems cannot straight-forwardly be modelled via additive rewards.
%fall outside the additive (modular) rewards realm. 
In this paper, we consider a rich and fundamental class of non-additive rewards, in particular {\em submodular reward functions}. Applications for planning under submodular rewards abound, from coverage control \citep{near_optimal_safe_cov}, entropy maximization, experiment design \citep{sensor-placement-andreas}, informative path planning,  orienteering problem \cite{GUNAWAN2016315}, resource allocation to Mapping~\citep{Swarm-SLAM}.

Submodular functions capture an intuitive diminishing returns property that naturally arises in these applications: {\em the reward obtained by visiting a state decreases in light of similar states visited previously}. 
E.g., in a biodiversity monitoring application (see \cref{fig: DR}), if the agent has covered a particular region, the information gathered from neighbouring regions becomes redundant and tends to diminish.
To tackle such history-dependent, non-Markovian rewards, one could naively augment the state to include all the past states visited so far. This approach, however, exponentially increases the state-space size, leading to intractability. 
In this paper, we make the following contributions:

\textbf{First,} we introduce \subrl, a paradigm for reinforcement learning with submodular reward functions. While this is the first work to consider submodular objectives in \RL, we connect it to related areas such as submodular optimization, convex \RL in \cref{sec: related_works}. To establish limits of the \subrl framework, we derive a lower bound that establishes hardness of approximation up to log factors (i.e., ruling out any constant factor approximation) in general (\cref{sec: NP_hard}). \textbf{Second,} despite the hardness, we show that, in many important cases,  \subrl instances can often be effectively solved in practice. In particular, we propose an algorithm, \subPO, motivated by the greedy algorithm in classic submodular optimization. It is a simple policy-gradient-based algorithm for \subrl that handles non-additive rewards by greedily maximizing marginal gains (\cref{sec: algo}).
\textbf{Third,} we show that in some restricted settings, \subPO performs {\em provably} well.  
In particular, under specific assumptions on the underlying \mdp, \subrl reduces to an instance of constrained continuous DR-submodular optimization over the policy space. Even though the reduced problem is still NP-hard, we guarantee convergence to the information-theoretically optimal constant factor approximation of $1-1/e$, generalizing previous results on submodular bandits. \rev{Moreover, for general \MDPs, if the submodular function has bounded curvature, we show \subPO achieves a constant factor approximation.}
(\cref{sec: theory}). \textbf{Lastly,} we demonstrate the practical utility of \subPO in simulation as well as real-world applications. Namely, we showcase its use in biodiversity monitoring, Bayesian experiment design, informative path planning, building exploration, car racing and Mujoco robotics tasks. Our algorithm is sample efficient, discovers effective strategies, and scales well to high dimensional spaces (\cref{sec: experiments}).

\begin{figure}
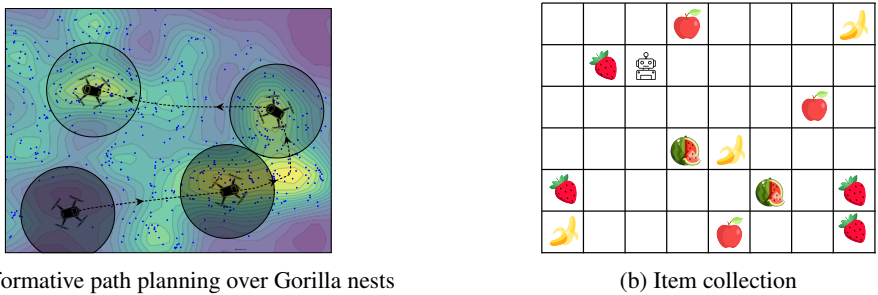

\vspace{-1.5 em}
% \centering
\begin{subfigure}[t]{0.5\columnwidth}
\centering
\scalebox{0.14}{\input{images/gorilla}}
\caption{Informative path planning over Gorilla nests} \label{fig: gorilla-env}
\end{subfigure}
% \hspace{20.00mm}
~
\begin{subfigure}[t]{0.5\columnwidth}
\centering
\scalebox{0.21}{\input{images/subrl-MDP}}
\caption{Item collection} \label{fig: steiner_covering}
\end{subfigure}
\hspace{-6.00mm}
\caption{In \cref{fig: gorilla-env}, for monitoring biodiversity, a drone needs to plan a path with maximum coverage over critical areas, represented by lighter regions in a heatmap. Here, the additional information (coverage) provided by visiting a location (state) depends on which states have been visited before. 
%In this case, there is not a single  optimal drone location leading to large rewards when visiting repeatedly. Instead, 
We therefore must visit diverse locations that maximize coverage of important regions. In \cref{fig: steiner_covering}, the environment contains a group of items ($g_i$) placed on a grid. The agent must find a trajectory ($\traj$) that picks a fixed number of items $d_i$ from each group $g_i$, i.e., $\max_{\traj} \sum_i \min(|\traj\cap g_i|,d_i)$. If the agent picks more than $d_i$, it is not rewarded -- diminishing gain. Both of these tasks cannot be represented with additive rewards (in terms of locations) and serve as illustrative examples for this work.}
%can safely expand the pessimistic set by evaluating decisions on the boundary (dark green shaded), but and evaluate in goal directed manner. 
\vspace{-5.00mm}
\label{fig: DR}
\end{figure}

%motivation - why is this good, and why classical RL cannot solve it. 
%related work 
% \input{sections/1.1-related-work}
\vspace{-2mm}
\section{Submodular RL: Preliminaries and Problem Statement}
\vspace{-2mm}
\label{sec: preliminaries}
% 1. submodularity: set function, DR property, monotonicity
\mypar{Submodularity} Let $\V$ be a ground set. A set function $\Obj: 2^{\V} \to \R$ is {\em submodular} if $\forall A \subseteq B \subseteq \V$, $v\in \V \backslash B$, we have, $\Obj(A \cup \{v\}) - \Obj(A) \geq \Obj(B \cup \{v\}) - \Obj(B)$. The property captures a notion of diminishing returns, i.e., adding an element $v$ to $A$ will help at least as much as adding it to the superset $B$. We denote the marginal gain of element $v$ as $\Delta(v|A)\coloneqq \Obj(A \cup \{v\}) - \Obj(A)$. Functions for which $\Delta(v|A)$ is {\em independent} of $A$ are called {\em modular}.
$\Obj$ is said to be {\em monotone} if $\forall A \subseteq B \subseteq \V$, we have, $\Obj(A) \leq F(B)$ (or, equiv., $\Delta(v\mid A)\geq 0$ for all $v$, $A$).

\looseness=-1
\mypar{Controlled Markov Process (\cmp)} A \cmp is a tuple formed by $\langle \V, \A, \mP, \rho, H \rangle$, where $\V$ is the ground set, $v \in \V$ is a state, $\A$ is the action space with $a\in\A$. $\rho$ denotes the initial state distribution and $\mP = \{P_h\}_{h=0}^{H-1}$, where $P_{\h}(v'|v,a)$ is the distribution of successive state $v'$ after taking action $a$ at state $v$ at horizon $\h$. We consider an episodic setting with finite horizon $H$.

\looseness= -1
\mypar{Submodular \mdp} We define a {\em submodular \mdp}, or \smdp, as a \cmp with a monotone submodular reward function, i.e., a tuple formed by $\langle \S, \A, \mP, \rho,  H, \Obj\rangle$. Hereby, using $\S = H \times \V$ to designate time-augmented states (i.e., each $s=(\h,v) \in \S$ is a state augmented with the current horizon step), we assume $\Obj:2^{\S} \to \R$\footnote{Without loss of generality, this can be extended to state-action based rewards} is a monotone submodular reward function. The non-stationary transition distribution $P_{\h}(v'|v, a)$ of the \cmp can be equivalently converted to $P((\h+1,v')|(\h, v), a) = P(s'|s, a)$ since $s$ accounts for time. An episode starts at $s_0 \sim \rho$, and at each time step $\h\geq 0$ at state $s_{\h}$, the agent draws its action $a_{\h}$ according to a policy $\pi$ (see below). The environment evolves to a new state $s_{\h+1}$ following the \cmp. A realization of this stochastic process is a trajectory $\tau = \big( (s_{\h},a_{\h})_{\h=0}^{H-1}, s_{H}\big)$, an ordered sequence with a fixed horizon $H$. $\traj_{l:l'}=\big( (s_{\h},a_{\h})_{\h=l}^{l'-1}, s_{l'}\big)$ denotes the part from time step $l$ to $l'$. Note that $\tau=\tau_{0:H}$. For each (partial) trajectory $\tau_{l:l'}$, we use the notation $F(\tau_{l:l'})$ to refer to the objective $F$ evaluated on the set of (state,time)-pairs visited by $\tau_{l:l'}$.
%interpreted as a subset of $\S$, and we 

% \vspace{-2em}
%%%%%%%%%%%%%Give intution that the optimal policy is non markovian
\looseness=-1
\mypar{Policies} The agent acts in the \smdp according to a {\em policy}, which in general maps histories $\tau_{0:h}$ to (distributions over) actions $a_h$.
A policy $\pi(a_{\h}|\tau_{0:h})$ is called {\em Markovian} if its actions only depend on the current state $s$, i.e., $\pi(a_{\h}|\tau_{0:h})=\pi(a_{\h}|s_h)$. The set of all Markovian policies is denoted by $\Pi_{\mv}$. Similarly, we can consider Non-Markovian policies $\pi(a_{\h}|\traj_{\h-k:\h})$ that only depend on the previous past $k$ steps $\traj_{\h-k:\h}$. The set of all Non-Markovian policies conditioned on history up to past $k$ steps is denoted by $\Pi^k_{\nm}$, and we use $\Pi_{\nm}:=\Pi^H_{\nm}$ to allow arbitrary history-dependence.

\looseness=-1
\mypar{Problem Statement} For a given submodular MDP, we want to find a policy to maximize its expected reward. For a given policy $\pi$, let $f(\tau;\pi)$ denote the probability distribution over the random trajectory $\tau$ following agent's policy $\pi$, \vspace{-0.5 em}
\begin{align} \label{eq: trajectory_distribution}
\vspace{-3.5 em}
f(\tau; \pi) = \rho(s_0) \prod_{\h=0}^{\horizon-1} \pi(a_{\h}|\tau_{0:\h}) P(s_{\h+1}|s_{\h},a_{\h}). \numberthis
\vspace{-2.5 em}
\end{align}
The performance measure $J(\pi)$ is defined as the expectation of the submodular set function over the trajectory distribution induced by the policy $\pi$ and the goal is to find a policy that maximizes the performance measure within a given family of policies $\Pi$ (e.g., Markovian ones). Precisely,
\begin{align}
\vspace{-1.5 em}
    \pi^{\star} = \argmax_{\pi \in \Pi} J(\pi),~\text{where}~ J(\pi)=\! \sum_{\tau} \!f(\tau;\pi) \Obj(\tau). \label{eq: obj}
\vspace{-1.5 em}
\end{align}
Given the non-additive reward function $F$, the optimal policy on Markovian and non-Markovian policy classes can differ (since the reward depends on the history, the policy may need to take the history into account for taking optimal actions). 
In general, even representing arbitrary non-Markovian policies is intractable, as its description size would grow exponentially with the horizon.
%would be of size$Omega((|\S||\A|)^{H}$ it requires. 
%Perhaps suprisingly, as we will show below, in some cases 
It is easy to see that for {\em deterministic} MDPs, the optimal policy is indeed attained by a deterministic Markovian policy:
% \begin{proposition}
\begin{restatable*}{prop}{restatedetmarkov} \label{prop:deterministicmarkov}For any deterministic \mdp with a fixed initial state, the optimal Markovian policy achieves the same value as the optimal non-Markovian policy.
\end{restatable*}  \vspace{-0.5em}
% While this is not true in general, in this paper, we focus on optimizing over the family of Markovian policies.  
%Therefore, \cref{eq: obj} defines the submodular reinforcement learning (\subrl) problem as optimizing in the class for Markovian policies $(\Pi_{\mv})$.
%We aim to find an optimal deterministic policy within the Markovian policy class $\Pi_M$. 
The following proposition guarantees that, even for stochastic transitions, there always exists an optimal deterministic policy among the family of Markovian policies $\Pi_M$. Thus, we do not incur a loss compared to stochastic policies.
%the existence of such a policy for objectives expressed as \cref{eq: obj}. 
% \begin{proposition}
\begin{restatable*}{prop}{restatedetMDP} \label{prop:restatedetMDP}
% For any set function $F$, there exists an optimal Markovian policy that is deterministic.
For any set function $F$, among the Markovian policies $\Pi_\mv$, there exists an optimal policy that is deterministic.
\end{restatable*}\vspace{-0.5em}
The proof is in \cref{apx: deterministic_policy}. 
The result extends to any non-Markovian policy class $\Pi^{k}_{\nm}$ for as well, since one can group together the past $k$ states and treat it as high-dimensional Markovian state space.

\mypar{Examples of submodular rewards} 
We first observe that classical MDPs are a strict special case of submodular MDPs.  Indeed, for some classical reward function $r:\V\to\R_{+}$, by setting $F((h_1,v_1),\dots,(h_k,v_k)):=\sum_{l=1}^k \gamma^{h_l} r(v_l)$, $F(\tau)$ simply recovers the (discounted) sum of rewards of the states visited by trajectory $\tau$ (hereby, $\gamma\in[0,1]$, i.e., use $\gamma=1$ for the undiscounted setting).

A generic way to construct submodular rewards is to take a submodular set function $F':2^\V\to\R$ defined on the ground set $\V$, and define $F(\traj) \coloneqq F'(\mathbf{T} \traj)$, using an operator $\mathbf{T}: 2^{\horizon \times \V} \to 2^{\V}$ that drops the time indices. Thus, $F(\tau)$ measures the value of the set of states $\mathbf{T} \traj\subseteq \V$ visited by $\tau$. Note that each state is counted only {\em once}, i.e., even if $F'$ is a modular function, $F$ exhibits diminishing returns. There are many practical examples of $F'$, such as coverage functions, experimental design criteria such as mutual information and others that have found applications in machine learning tasks \citep[cf.][]{krause2014submodular,bilmes2022submodularity}. Moreover, many operations preserve submodularity, which can be exploited to build complex submodular objectives from simpler ones \citep[section 1.2]{krause2014submodular} and can potentially be used in reward shaping. While submodularity is often considered for discrete $\V$, the concept naturally generalizes to continuous domains.

\vspace{-1mm}
\section{Submodular RL: Theoretical Limits}\vspace{-1mm}
\label{sec: NP_hard}
\looseness -1 We first show that the \subrl problem is hard to approximate in general.  
In particular, we establish a lower bound that implies \subrl cannot be approximated up to any constant factor in polynomial time, even for {\em deterministic} submodular MDPs. 
We prove this by reducing our problem to a known hard-to-approximate problem -- the submodular orienteering problem (\sop) \citep{chekuri_rg}.  
Since we focus on {\em deterministic} \smdp's, according to Proposition~\ref{prop:deterministicmarkov} and Proposition~\ref{prop:restatedetMDP}, it suffices to consider deterministic, Markovian policies. We now formally state the inapproximability result, 
% \vspace{-5mm}
\begin{restatable*}{theorm}{restateinapprox}\label{thm: inappx}
Let \opt be the optimal value and $\gamma>0$. Even for deterministic \smdp's, the \subrl problem is hard to approximate within a factor of $\Omega(\log^{1-\gamma} \opt)$ unless $\NP \subseteq \ZTIME(n^{polylog(n)})$. 
\end{restatable*} %\vspace{-0.5em}
Thus, under common assumptions in complexity theory \citep{chekuri_rg,Halperin2003inapprox}, the \subrl problem cannot be approximated in general to better than logarithmic factors, i.e., no algorithm can guarantee $J(\pi) \geq \frac{\opt}{\log^{1-\gamma}\opt}$ for all input instances of \subrl. The proof is in \cref{apx: inapprox}.
The significance of this result extends beyond submodular RL. As \subrl falls within the broader category of general non-Markovian reward functions, \cref{thm: inappx} implies that problems involving general set functions are similarly inapproximable, limited to logarithmic factors.

Since our inapproximability result is worst-case in nature, it does not rule out that interesting \subrl problems remain practically solvable. In the next section, we introduce a general algorithm that is efficiently implementable, recovers constant factor approximation under assumptions (\cref{sec: theory}) and is empirically effective as shown in an extensive experimental study (\cref{sec: experiments}).

% Our inapproximability result, while worst-case in nature, does not rule out that interesting \subrl problems may be practically solvable. 

% for any \subrl problem. %Under assumptions, the algorithm can recover a constant factor approximation \cref{sec: theory} and later we demonstrate the effectiveness of the algorithm in an extensive experimental study involving submodular rewards \cref{sec: experiments}.

% We demonstrate the effectiveness of the algorithm and its utility in an extensive experimental study involving submodular rewards.

% \remove{Since the inapproximability result we present is worst-case in nature, it does not rule out that interesting \subrl problems remain practically solvable. In the next section, we introduce a general algorithm that can be efficiently implemented for any \subrl problem. In \cref{sec: experiments}, we demonstrate the effectiveness of the algorithm and its utility in an extensive experimental study involving submodular rewards. Additionally, in \cref{sec: theory}, we establish that our algorithm enjoys a constant factor approximation under stronger assumptions on the Markov chain.}

% introduce the problem - formally 
% state the hardness

% how to transition: In worst case, NP-hard to solve, but still practically solvable as its important. We provide provide geenral alg in the next section and then moving from wors-tscase we provide provable 1/2 approximation in polynomial time for a speicific class of MDPs. 
\savebox{\algleft}{
\begin{minipage}[t]{.60\textwidth}
\vspace{-11 em}
\begin{algorithm}[H]
\caption{Submodular Policy Optimization (\subPO)}
\begin{algorithmic}[1]
% \SetAlgoLined
\State \textbf{Input:} $\smdp \M, \pi, N, B$
\For{epoch $k = 1:N$} 
% \textit{\textbf{(data collection)}}
\!\!\!\!\For{batch $b = 1:B$} 
\!\!\!\!\For{$h=0:H-1$} 
\State \!\!\!\!Sample $a_h \sim \pi(a_h|s_h)$, execute $a_h$
\State \!\!\!\!$D \!\leftarrow \!\{s_h,a_h, F(\traj_{0:h+1})\}$ \label{alg: sample_collect} 
\EndFor
\EndFor 
% \textit{\textbf{(Gradient estimator)}}
\State Estimate $\nabla_{\theta} J(\pi_{\theta})$ as per \cref{thm: PG} \label{alg: estimator} %, \textcolor{red}{Entropy Expl. $J(\pi) 
 % \leftarrow J(\pi) + \alpha H(\pi)$ }
% \State \textcolor{red}{Momentum $\nabla_{\pi} J(\pi_{k}) \! \leftarrow \! \beta \nabla_{\pi} J(\pi_{k-1})\! + (1-\beta)\nabla_{\pi} J(\pi)$, (try Adams?), Value net. critic $v_{\theta}(\traj_{0:i})$} \\
% \textit{\textbf{(Policy update)}}
% \State $\pi_{k+1} = \argmax_{\pi \in \Pi} \pi \cdot \nabla_{\pi} J(\pi_k)$
% \State $\pi_{k+1} = (1-\alpha_k)\pi_{k} + \alpha_k \pi_{k+1}$, $\alpha_k = \frac{2}{2+k}$
% \State \textcolor{red}{Projected gradient descent on simplex or GD + softmax for simplex constraint}
\State Update policy parameters $(\theta)$  using \cref{eqn: gd}
% \State $\theta \leftarrow \theta + \alpha \nabla_{\theta} J(\pi)$, using \cref{eqn: gd}
\EndFor
\end{algorithmic}
\label{alg:subrl}
\end{algorithm}
\end{minipage}}

\savebox{\mdpfigright}{
\begin{minipage}[t]{.35\textwidth}
\centering
    \scalebox{0.35}{\input{images/subrl-eps-bandit}} 
\end{minipage}     }

% \begin{figure*}
% \noindent\usebox{\algleft}\hfill
% \vspace{-6mm}
% \end{figure*}\vspace{-1.5mm}
\section{General Algorithm: Submodular Policy Optimization \subPOh} \vspace{-1.5mm}
\label{sec: algo}
\looseness -1 We now propose a practical algorithm for \subrl that can efficiently handle submodular rewards. The core of our approach follows a greedy gradient update on the policy $\pi$. As common in the modern \RL literature, we make use of approximation techniques for the policies to derive a method applicable to large state-action spaces. This means the policy $\pi_{\theta}(a|s)$\footnote{autoregressive policies (RNNs or transformers) can be used to capture history-dependence in the same algorithm} 
is parameterized by $\theta \in \Theta$ where $\Theta \subset \R^l$ is compact. In the case of tabular $\pi$, $\theta$ specifies an independent distribution over actions for each state.

\mypar{Approach} 
\looseness -1 The objective from \cref{eq: obj} can be equivalently formulated as $\theta^{\star} \in \argmax_{\theta \in \Theta} J(\pi_\theta)$ as $\theta$ indexes our policy class. Due to the nonlinearity of the parameterization, it is often not feasible to find a global optimum for the above problem. In practice, with appropriate initialization and hyperparameters, variants of gradient descent are known to perform well empirically for MDPs. Precisely, \vspace{-0.4em}
\begin{align}
    \theta \leftarrow \theta + \argmax_{\delta \theta: \delta \theta + \theta \in \Theta} \delta \theta^\top \nabla_{\theta} J(\pi_\theta) - \frac{1}{2 \alpha} \|\delta\theta\|^2. \label{eqn: gd} 
\end{align}
Various \PG methods arise with different methods for gradient estimation and applying regularization \citep{kakade2001natural,schulman2015trust,schulman2017proximal}. The key challenge to all of them is computation of the gradient $\nabla J(\pi_\theta)$. Below, we devise an unbiased gradient estimator for general non-additive functions.

\mypar{Gradient Estimator} As common in the policy gradient (\PG) literature, we can use the score function $g(\traj, \pi_\theta) \coloneqq \nabla_{\theta} (\log \prod\nolimits_{i=0}^{\horizon-1}\pi_{\theta}(a_i|s_i))$ to calculate the gradient $\nabla_{\theta} J$. Namely, Given an \mdp and the policy parameters $\theta$, \vspace{-0.4em} 
\begin{equation}  \label{prop: score}\nabla_{\theta} J(\pi_{\theta}) = \sum_{\traj} f(\traj; \pi_\theta) g(\traj, \pi_\theta) F(\traj).
\end{equation}
\looseness -1 As \cref{prop: score} shows, we do not require knowledge of the environment if sampled trajectories are available. It also does not require full observability of the states nor any structural assumption on the \mdp. On the other hand, the score gradients suffer from high variance due to sparsity induced by trajectory rewards \citep{FU_score,prajapat2021competitive,sutton2018reinforcement}. Hence, we take the \smdp structure into account to develop efficient algorithms. \vspace{-0.1em}

\emph{Marginal gain:} We define the marginal gain for a state $s$ in the trajectory $\traj_{0:j}$ up to horizon $j$ as \vspace{-0.25em}\[F(s|\traj_{0:j}) = F(\traj_{0:j}\cup \{s\}) - F(\traj_{0:j}).\vspace{-0.25em}\] Our approach aims to maximize the marginal gain associated with each action instead of maximizing state rewards. 
This approach shares similarities with the greedy algorithm commonly used in submodular maximization, which maximizes marginal gains and is known for its effectiveness. Moreover, decomposing the trajectory return into marginal gains and incorporating it in the policy gradient with suitable baselines \cite{greensmith2004variance} removes sparsity and thus helps to reduce variance. Inspired by the policy gradient method for additive rewards \citep{sutton1999policy,baxter2001infinite}, we propose the following for \smdp: %For $\traj_{l:l'}$, the events from time step $l$ to $l'$, we have:
\vspace{-0.1em}
\begin{restatable*}{theorm}{restatePG} \label{thm: PG} Given an \smdp and the policy parameters $\theta$, with any set function $\Obj$,\vspace{-0.5em}
    \begin{align}
        \nabla_{\theta} J(\pi_{\theta}) = \E_{\traj \sim f(\traj ; \pi_\theta)} \left[ \sum_{i=0}^{H-1} \nabla_{\theta} \log \pi_{\theta}(a_i|s_i) \left(\sum_{j=i}^{H-1}F(s_{j+1}|\traj_{0:j}) - b(\traj_{0:i})\right) \right] \label{eqn: grad_esti}
    \end{align}\vspace{-1.1em}
\end{restatable*} \vspace{-0.6em}
We use an importance sampling estimator (log trick) to obtain \cref{prop: score}. To reduce variance, we subtract a baseline $b(\traj_{0:i})$ from the score gradient, which can be a function of the past trajectory $\traj_{0:j}$. This incorporates the causality property in the estimator, ensuring that the action at timestep $j$ cannot affect previously observed states. After simplifying and considering marginals, we obtain \cref{thm: PG} (proof is in \cref{apx: PG}). This estimator assigns a higher weight to policies with high marginal gains and a lower weight to policies with low marginal gains. Empirically this performs very well (\cref{sec: experiments}). 

We can optimize this approach by using an appropriate baseline as a function of the history $\traj_{0:j}$, which leads to an actor-critic type method. The versatility of the approach is demonstrated by the fact that \cref{thm: PG} holds for any choice of baseline critic. We explain later in the experiments how to choose a baseline. One can perform a Monte Carlo estimate ~\citep{baird1993} or generalized advantage function (\GAE)~\citep{schulman2015highdimensional} to estimate returns based on the marginal gain.
% However, what the ideal critic function is for non-additive functions needs to be clarified, and we leave it as future work. 
To encourage exploration, similar to standard \PG, we can employ a soft policy update based on entropy penalization, resulting in diverse trajectory samples. Entropy penalization in \subrl can be thought of as the sum of modular and submodular rewards, which is a submodular function.

\mypar{Algorithm} The outline of the steps is given in \cref{alg:subrl}. We represent the agent by a stochastic policy parameterized by a neural network. 
% At each state, the agent uses a categorical distribution over the set of actions. We apply a softmax operation to the distribution to enforce the simplex constraint. 
The algorithm operates in epochs and assumes a way to generate samples from the environment, e.g., via a simulator. In each epoch, the agent recursively samples actions from its stochastic policy and applies them in the environment leading to a \emph{roll out} of the trajectory where it collects samples (\cref{alg: sample_collect}). We execute multiple ($B$) batches in each epoch for accurate gradient estimation.
To update the policy, we compute the estimator of the policy gradient as per \cref{thm: PG}, where we utilize marginal gains of the trajectory instead of immediate rewards as in  standard RL (\cref{alg: estimator}). Finally, we use stochastic gradient ascent to update the policy parameters.

\vspace{-2mm}
\section{Provable Guarantees in Simplified Settings}
\vspace{-2mm}
\label{sec: theory}

\looseness=-1
In general, \cref{sec: NP_hard} shows \subrl is NP-hard to approximate. A natural question is: Is it possible to do better under additional structural assumptions? 
% In this section, we present an interesting case under which \subPO can be approximated to a constant factor via DR-submodular optimization. 
\rev{In this section, we present two interesting cases under which \subPO can approximate objective \eqref{eq: obj} up to a constant factor. Firstly, under assumptions on the underlying \mdp we show its equivalence to DR-submodular optimization on a convex polytope. Secondly, for general \MDPs, we capture the deviation of submodular rewards from the modular function using the notion of curvature and present a constant factor approximation result for it.}
%Firstly via DR-submodular optimization under assumptions on underlying \mdp and secondly using the curvature of the submodular function capturing deviation from the modular function.
\begin{definition}[DR submodularity and DR-property, \cite{bian2017continuous}] For $\X \subseteq \R^d$, a function $f: \X \to \R$ is DR-submodular (has the DR property) if $\forall \mathbf{a} \leq \mathbf{b} \in \X$, $\forall i \in [d]$, $\forall k \in \R_{+}$ s.t. $(ke_i + \mathbf{a})$ and $(k e_i + \mathbf{b})$ are still in $\X$, it holds, $f(k \mathbf{e}_i + \mathbf{a}) - f(\mathbf{a}) \geq f(k \mathbf{e}_i + \mathbf{b}) - f(\mathbf{b}). ~($Notation: $\mathbf{e}_i$ denotes $i^{th}$ basis vector and for any two vectors $\mathbf{a}, \mathbf{b}, \mathbf{a} \leq \mathbf{b}$ means $a_i \leq b_i \forall i\in [d])$
\end{definition}\vspace{-0.5em}
Monotone DR-submodular functions form a family of generally non-convex functions, which can be approximately optimized.  As discussed below, gradient-based algorithms find constant factor approximations over general convex, downward-closed polytopes.

Under the following condition on the Markov chain, we can show that as long as the policy is parametrized in a particular way, the objective is indeed monotone DR-submodular. 
\begin{definition}[$\epsilon$-Bandit \smdp] \label{asm: bandit} An \smdp s.t. for any $v_j, v_k \in \V$, $j \neq k$, $\forall h \in [H]$ and $\forall v' \in \V$, $P_h(v_j|v',a_{j})= 1-\epsilon_h$, and $P_h(v_k|v',a_{j}) = \frac{\epsilon_h}{|\V|-1}$ for $\epsilon_h \in \left[0, \frac{|\V|}{|\V|+1}\right]$ is an $\epsilon$-Bandit \smdp.
\end{definition}\vspace{-0.0em}
\begin{figure}
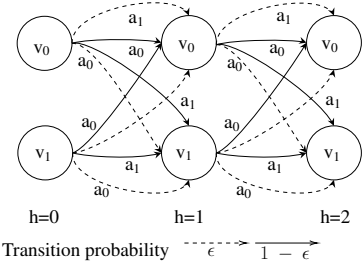
\vspace{-2.2em}
\captionsetup{singlelinecheck=false,justification=raggedleft}
\noindent\usebox{\algleft}\hfill\usebox{\mdpfigright} \vspace{-1.4em}\caption{Transitions in $\epsilon$-Bandit \mdp}\label{fig:bandit_mdp}\vspace{-1.4em}
\end{figure}\vspace{-0.6em}
% We call \mdp system that satisfies the above $\epsilon$-Bandit \smdp. 
This represents a "nearly deterministic" \mdp where there is a unique action for each state in the \mdp, which takes us to it with $1-\epsilon$ probability and with the rest, we end up in any other state \cref{fig:bandit_mdp}. 
% On top of this, we are allowed to shrink or increase the state space over the horizon indicated by changing state space $\S^h$. 
While limiting, it generalizes the bandit scenario, which would occur when $\epsilon = 0$. In the following, we consider a class of state-independent policies that can change in each horizon, denoting the horizon dependence with $\pi^h(a)$.
% In the following, we consider a class of state-independent policy that is allowed to change within each round, so we denote it with $\pi^h(a)$, where $\Delta^{|\A|-1}$ is a simplex constraint. 
We now formally establish the connection between \subrl and DR-submodularity,
\looseness=-1
\begin{restatable*}{theorm}{restateDRsub} 
For horizon dependent policy $\pi$ parameterized as $\pi^h(a) \forall h \in [H]$ in an $\epsilon$-Bandit \smdp, and $F(\traj)$ is a monotone submodular function, then $J(\pi)$ is monotone DR-submodular.
\label{thm: dr_sub}
\end{restatable*}\vspace{-0.5em}
The proof is in \cref{apx: dr_sub}. It builds on two steps; firstly, we use a  reparameterization trick to handle policy simplex constraints. We relax the equality constraints on $\pi$ to lie on a convex polytope $\mathcal{P} = \{ \pi^h(a) ~|~ 0 \leq \pi^h(a) \leq 1, 0 \leq \sum_{j, j \neq k} \pi^h(a_j) \leq 1, \forall k\in[|\A|], \forall h\in [H] \}$ and enforce the equality constraints directly in the objective \cref{eq: obj}. Secondly, under the assumptions of \cref{thm: dr_sub}, we show that the Hessian of $J(\pi)$ only has non-positive entries, which is an equivalent characterization of twice differentiable DR-submodular functions. Furthermore, the result can be generalized to accommodate state and action spaces that vary with horizons, although, for simplicity, we assumed fixed spaces. 

The convex polytope $\mathcal{P}$ belongs to a class of down-closed convex constraints.
\citet{bian2017guaranteed} proposes a modified \frank algorithm for DR-submodular maximization with down-closed constraints. This variant can achieve an $(1-1/e)$ approximation guarantee and has a sub-linear convergence rate. 
The algorithm proceeds as follows: the gradient oracle is the same as \cref{thm: PG}, while employing a tabular policy parameterization.
The polytopic constraints $\mathcal{P}$ are ensured through a \frank step, which involves solving a linear program over the policy domain. Finally, the policy is updated with a specific step size defined in \citep{bian2017guaranteed}.
Furthermore, \citet{hassani2017gradient} shows that any stationary point in the optimization landscape of DR-submodular maximization under general convex constraints is guaranteed to be $1/2$ optimal. Therefore, any gradient-based optimizer can be used for the $\epsilon$-Bandit \smdp, and will result in an $1/2$-optimal policy.
In \cref{sec: related_works}, we elaborate on how this setting generalizes previous works on submodular bandits.

\mypar{General \smdp} While we cannot obtain a provable result for general \smdp's (\cref{thm: inappx}), we can, interestingly, quantify the deviation of submodular function from a modular function using the notion of curvature \citep{CONFORTI1984251}. %and hope to get guarantees under the bounded deviation.
The \emph{curvature} reflects how much the marginal values $\Delta(v|A)$ can decrease as a function $A$. The total curvature of $F$ is defined as, $c = 1 - \min_{A, j \notin A} \Delta(j|A)/F(j).$
Note that $c\in[0,1]$, and if $c=0$ then the marginal gain is independent of $A$ (i.e., $F$ is modular).

\begin{restatable*}{prop}{restateboundedC} \label{prop: bounded_C}
    Consider a tabular \smdp, s.t.~the reward function $\Obj$ is monotone submodular with bounded curvature $c\in(0,1)$.  Then, for the policy $\pi$ (with tabular parametrization) obtained via \subPO, it holds that
    $J(\pi) \geq (1-c) J(\pi^{\star})$, where $\pi^{\star}$ is an optimal non-Markovian policy.
%    \subPO under tabular policy parameterization converges to a policy $\pi$. For any monotone submodular function $\Obj$ with a bounded curvature $c\in(0,1)$, \subPO guarantees $J(\pi) \geq (1-c) J(\pi^{\star})$, where $\pi^{\star}$ is an optimal non-Markovian policy.
\end{restatable*}\vspace{-0.6em}
Thus, under assumptions of bounded curvature, $c\in(0,1)$, we can guarantee constant factor optimality for \subPO (proof in \cref{apx: proof_curvature}). \citet{vondrak2010submodularity} establishes a curvature-based hardness result for the simpler problem of submodular set function maximization under cardinality constraints, implying that $1-c$ is a near-optimal approximation ratio.
Moreover, if $c=0$, i.e., $F$ denotes modular rewards, the \subPO algorithm reduces to standard \PG, and hence recovers the guarantees and benefits of the modular \PG. In particular, with tabular policy parameterization, under mild regularity assumptions, any stationary point of the modular \PG cost function is a global optimum \citep{bhandari2019global}.
% In this section, we present two interesting cases under which \subPO can be approximated to a constant factor. Firstly via DR-submodular optimization under assumptions on underlying \mdp and secondly using the curvature of the submodular function capturing deviation from the modular function.

\vspace{-0.7em}

\section{Related Work}
\vspace{-0.2 em}
\label{sec: related_works}
\mypar{Beyond Markovian \RL}
%We extend the modelling ability of classical reinforcement learning with Markov chain and Markovian rewards and consider a class of non-Markovian reward functions. 
Several prior works in \RL identify the deficiency in the modelling ability of classical Markovian rewards. This manifests itself especially when exploration is desired, e.g., when the transition dynamics are not completely known \citep{tarbouriech2020active, Hazan2019} or when the reward is not completely known \cite{lindner2021information, Belogolovsky2021}. \citet{chatterji2021theory} considers binary feedback drawn from the logistic model at the end episode and explores state representation, such as additivity, to propose an algorithm capable of learning non-Markovian policies.
While all these address in some aspect the shortcomings of Markovian rewards, they tend to focus on a specific aspect instead of postulating a new class of reward functions as we do in this work. %Regarding the expressivity \citep{abel2021expressivity} of the non-additive rewards, if policy search is over the Markovian class, the optimal policy is deterministic (\cref{prop:restatedetMDP}) and thus there exists a Markovian reward that leads to the same optimal policy. However, since we do not restrict to the Markovian policies, given a policy class with compact history representation, \subPO can learn behaviours beyond the expressivity of the Markovian rewards.

\mypar{Convex \RL} Convex \RL also seeks to optimize a family of non-additive rewards.  The goal is to find a policy that optimizes a convex function over the state visitation distribution (which averages over the randomness in the \mdp and the policies actions).  This framework has applications, e.g., in exploration and experimental design \citep{Hazan2019, Zahavy2021,Duff2002, tarbouriech2020active,Mutny2023}. %and  \citep{}
While sharing some motivating applications, convex and submodular \RL are rather different in nature.  Beyond the high-level distinction that convex and submodular function classes are complimentary, our non-additive (submodular) rewards are defined over the {\em actual sequence} of states visited by the policy, not its {\em average behaviour}. \citet{mutti2022challenging} points out that this results in substantial differences, noting the deficiency of convex \RL in modelling expected utilities of the form as in~\cref{eq: obj}, which we address in our work.

\mypar{Submodular Maximization}
Submodular functions are widely studied in combinatorial optimization and operations research and have found many applications in machine learning \citep{krause2014submodular,bilmes2022submodularity,tohidi20}.
%concepts popularized with 
The seminal work of  \citet{nemhauser1978analysis} shows that greedy algorithms enjoy a constant factor $1-1/e$ approximation for maximizing monotone submodular functions under cardinality constraints, which is information- and complexity- theoretically optimal \citep{Feige-no-other-effi-algo}. Beyond simpler cardinality (and matroid) constraints, more complex constraints have been considered: most relevant is the s-t-submodular orienteering problem, which aims to find an s-t-path in a graph of bounded length maximizing a submodular function of the visited nodes \citep{chekuri_rg}, and can be viewed as a special case of \subrl on deterministic \smdp's with deterministic starting state and hard constraint on the goal state.  It has been used as an abstraction for informative path planning \citep{Singh2009}. We generalize the setup and connect it with modern policy gradient techniques. \citet{wang2020planning} considers planning under the surrogate multi-linear extension of submodular objectives. Certain problems considered in our work satisfy a notion called \emph{adaptive submodularity}, which generalizes the greedy approximation guarantee over a set of policies \citep{golovin2011adaptive}. While adaptive submodularity allows capturing history-dependence, it fails to address complex constraints (such as those imposed by \cmp's). 

While submodularity is typically considered for discrete domains (i.e., for functions defined on $\{0,1\}^{|\V|}$, the concept can be generalized to continuous domains, e.g., $[0,1]^{|\V|}$ using notions such as DR-submodularity \citep{greedy-supermodular}. This notion forms a class of non-convex problems admitting provable approximation guarantees in polynomial time, which we exploit in Section~\ref{sec: theory}. The problem of learning submodular functions has also been considered \citep{balcan2011learning}. \citet{dolhansky2016deep} introduce the class of deep submodular functions, neural network models guaranteed to yield functions that are submodular in their input. These may be relevant when learning unknown rewards using function approximation, which is an interesting direction for future work. 

Since submodularity is a natural characterization of diminishing returns, numerous tasks involving exploration or discouraging repeated actions \citep{basu2019blocking} can be captured via submodular functions. In addition to our experiments discussing experiment design, item collection and coverage objectives, \cref{tab:subrlapplications} provides a summary of problems that can be addressed with \subrl.

%The bandit problem has been considered in the submodular setting as well. 
The submodular bandit problem is at the interface of learning and optimizing submodular functions \citep{streeter2008online, Chen2017InteractiveSB, YisongLSB}. Algorithms with no-regret (relative to the 1-1/e approximation) exist, whose performance can be improved by exploiting linearity \citep{YisongLSB} or smoothness \citep{Chen2017InteractiveSB} in the objective.  Our results in Section~\ref{sec: theory} can be viewed as addressing (a generalization of) the submodular stochastic bandit problem.  Exploiting further linearity or smoothness to improve sample complexity is interesting direction for future work. 
% \rev{Given the submodular functions capturing tasks that need explorations, other frameworks such as blocking bandits \citep{basu2019blocking} can also be arguably modelled with submodular functions}. 
\vspace{-6 mm}
\section{Experiments}
\vspace{-2 mm}
\label{sec: experiments}
\begin{figure} \vspace{-1.8 em}
	\hspace{-4.00mm}
	\centering
    \begin{subfigure}[t]{0.31\columnwidth}
  	\centering
  	\includegraphics[scale=0.75]{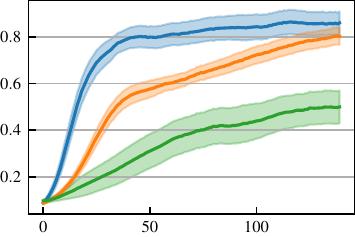}
    \caption{Coverage over Gorilla nests}
    \label{fig: constant_H40}
    \end{subfigure}
    % \hspace{-6.00mm}
~    
    \begin{subfigure}[t]{0.31\columnwidth}
  	\centering
  	\includegraphics[scale=0.75]{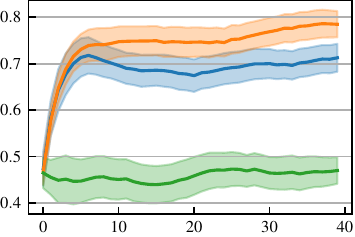}
    \caption{Item collection}
    \label{fig: steiner_plot}
    \end{subfigure}
~
    \begin{subfigure}[t]{0.31\columnwidth}
  	\centering
  	\includegraphics[scale=0.75]{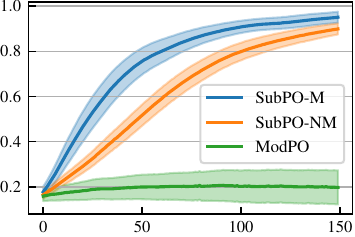}
    \caption{Experiment design}
    \label{fig: entropy_H40}
    \end{subfigure}
    \hspace{-4.00mm}
\caption{Comparison of \subPOm, \subPOnm and \mrl. We observe that \mrl get stuck by repeatedly maximizing its modular reward, whereas \subPOm achieves comparable performance to \subPOnm while being more sample efficient. (Y-axis: normalized $J(\pi)$, X-axis: epochs)}
\vspace{-1.7em}
\end{figure}
\looseness -1 We empirically study the performance of \subrl on multiple environments. They are i) Informative path planning, ii) Item collection, iii) Bayesian D-experimental design, iv) Building exploration, v) Car racing and vi) Mujoco-Ant. The environments involve discrete (i-iv) and continuous (v-vi) state-action spaces and capture a range of submodular rewards, illustrating the versatility of the framework.

\looseness -1 The problem is challenging in two aspects: firstly, how to maximize submodular rewards, and secondly, how to maintain an effective state representation to enable history-dependent policies. Our experiments mainly focus on the first aspect and demonstrate that even with a simple Markovian state representation, by \emph{greedily maximizing marginal gains}, one can achieve good performance similar to the ideal case of non-Markovian representation in many environments. However, we do not claim that a Markovian state representation is sufficient in general. For instance, in the building exploration and item collection environments, Markovian policies are insufficient, and a history-dependent approach is necessary for further optimization. Natural avenues are to augment the state representation to incorporate additional information, e.g., based on domain knowledge, or to use a non-Markovian parametric policy class such as RNNs.
Exploring such representations is application specific, and beyond the scope of our work.

\looseness -1 We consider two variants of \subPO: \subPOm and \subPOnm, corresponding to Markovian and non-Markovian policies, respectively. \subPOnm uses a stochastic policy that conditions an action on the history. We model the policy using a neural network that maps the history to a distribution over actions, whereas \subPOm maps the state to a distribution over actions. Disregarding sample complexity, we expect \subPOnm %to represents 
perform the best, % one can achieve 
since it can track the complete history. In our experiments, we always compare with  \emph{modular \RL} (\mrl), a baseline that represents standard RL, where the additive reward for any state $s$ is $F(\{s\})$. 
In \mrl, we maximize the undiscounted sum of additive rewards, whereas, in contrast, \subPO maximizes marginal gains. The rest of the process remains the same, i.e., we use the same policy gradient method.
We implemented all algorithms in Pytorch and will make the code and the videos public. We deploy Pytorch's automatic differentiation package to compute an unbiased gradient estimator. Experiment details and extended empirical analysis are in \cref{apx: experiments}. 
% For the first four environments, we consider a finite state action space where the agent takes a high-level action moving up, down, left, right or stay. 
Below we explain our observations for each environment: \vspace{-0.05em}

% \subsection{Finite state action domains} 
\looseness -1 \mypar{Informative path planning} We simulate a bio-diversity monitoring task, where we aim to cover areas with a high density of gorilla nests with a quadrotor in the Kagwene Gorilla Sanctuary (\cref{fig: gorilla-env}). The quadrotor at location $s$ covers a limited sensing region around it, $\Discat[s]$. 
The quadrotor starts in a random initial state and follows deterministic dynamics. It is equipped with five discrete actions representing directions.
Let $\density:\Domain \to \R$ be the nest density obtained by fitting a smooth rate function \citep{mojmir-cox} over Gorilla nest counts \citep{gorilla-kagwene}. The objective function is given by $\Obj(\traj) = g(\bigcup_{s \in \traj} \Discat[s])$, where $g(V) = \sum_{v \in V} \density(v)$. As shown in \cref{fig: constant_H40}, we observe that \mrl repeatedly maximizes its modular reward and gets stuck at a high-density region, whereas \subPO achieves performance as good as \subPOnm while being more sample efficient.
To generalize the analysis, we replace the nest density with randomly generated synthetic multimodal functions and observe a similar trend (\cref{apx: experiments}). \vspace{-0.05em}

\looseness -1 \mypar{Item collection} As shown in \cref{fig: steiner_covering}, the environment consists of a grid with a group of items $\mathcal{G} = \{\textit{banana, apple, strawberries, watermelon}\}$ located at $g_i\subseteq \V$, $i \in \mathcal{G}$. We consider stochastic dynamics such that with probability 0.9, the action we take is executed, and with probability 0.1, a random action is executed \textit{(up, down, left, right, stay)}. 
The agent has to find a policy that generates trajectories $\traj$, which picks $d_i$ items from group $g_i$, for each $i$. Formally, the submodular reward function can be defined as $F(\traj) = \sum\nolimits_{i \in \mathcal{G}} \min(|\traj \cap g_i|, d_i)$. 
We performed the experiment with 10 different randomly generated environments and 20 runs in each. 
In this environment, the agent must keep track of items collected so far to optimize for future items. Hence as shown in \cref{fig: steiner_plot}, \subPOnm based on non-Markovian policy achieves good performance, and \subPOm achieves a slightly lower but yet comparable performance just by maximizing marginal gains. \vspace{-0.05em}

\begin{figure}\vspace{-2.7em}
	\hspace{-3.00mm}
	\centering
    \begin{subfigure}[t]{0.33\columnwidth}
  	\centering
  	\scalebox{0.21}{\input{images/subrl-two_room_slim}}
             \vspace{-4.00mm}
    \caption{Building exploration} \label{fig: two_room}
    \end{subfigure}
    % \hspace{-6.00mm}
~
    \begin{subfigure}[t]{0.33\columnwidth}
  	\centering
  	\includegraphics[scale=0.42]{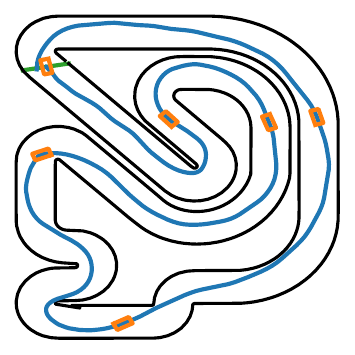}
    \caption{Car racing}
    \label{fig: car_track}
    \end{subfigure}
    \hspace{-6.00mm}
~
    \begin{subfigure}[t]{0.33\columnwidth}
  	\centering
  	\includegraphics[scale=0.42]{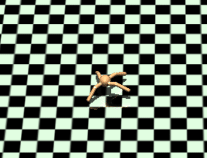}
    \caption{Mujoco Ant}
    \label{fig: mujoco_ant}
    \end{subfigure}
\caption{Challenging tasks modelled via submodular reward functions. Primarily, the agent at location $s$ senses a region, $\Discat[s]$ and seeks a policy to maximize submodular rewards $F(\traj) = |\cup_{s\in \traj}\Discat[s]|$. a) The agent, starting from the middle, must learn to explore both rooms. b) The car must learn to drive \& finish the racing lap c) Ant must learn to walk to cover the maximum 2D space around itself.}
\vspace{-1.8 em}
\end{figure}

\mypar{Bayesian D-experimental design} In this experiment, we seek to estimate an a-priori unknown function $f$. The function $f$ is assumed to be regular enough to be modelled using Gaussian Processes. 
Where should we sample $f$ to estimate it as well as possible? Formally, our goal is to optimize over trajectories $\traj$ that provide maximum mutual information between $f$ and the observations $y_{\traj} = f_{\traj} + \epsilon_{\traj}$ at the points $\traj$. The mutual information is given by $I(y_{\traj};f) = H(y_{\traj}) - H(y_{\traj}|f)$, representing the reduction in uncertainty of $f$ after knowing $y_{\traj}$, where $H(y_{\traj})$ is entropy. We define the monotonic submodular function $F(\traj) = I(y_{\traj};f)$.  
The gorilla nest density $f$ is an a-priori unknown function. We generate 10 different environments by assuming random initialization and perform 20 runs on each to compute statistical confidence.  
In \cref{fig: entropy_H40}, we observe a similar trend that \mrl gets stuck at a high uncertainty region and cannot effectively optimize the information gained by the entire trajectory, whereas \subPOm achieves performance as good as \subPOnm while being very sample efficient due to the smaller search space of the Markovian policy class.

\begin{figure}\vspace{-1.9em}
\hspace{-3.00mm}
\centering
    \begin{subfigure}[t]{0.335\columnwidth}
      \centering
  	\includegraphics[scale=0.78]{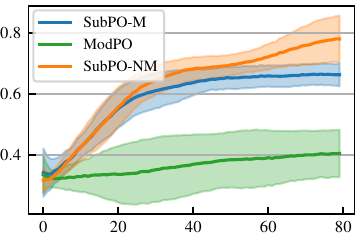}
    \caption{Building Exploration}
    \label{fig: plot_two_room}
    \end{subfigure}
~ \hspace{-3.00mm}
    \begin{subfigure}[t]{0.33\columnwidth}
  	\centering
  	\includegraphics[scale=0.77]{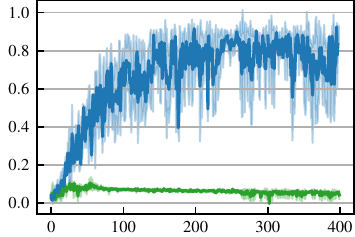}
    \caption{Car Racing}
    \label{fig: plot_car_racing}
    \end{subfigure}
~
    \begin{subfigure}[t]{0.33\columnwidth}
  	\centering
  	\includegraphics[scale=0.77]{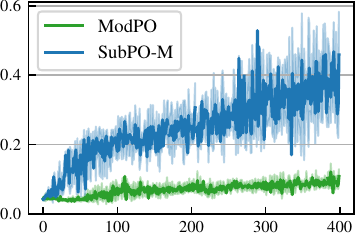}
    \caption{Mujoco Ant}
    \label{fig: plt_mujoco}
    \end{subfigure}
\caption{a) Coverage in building exploration, \subPOnm tracks history and can explore the other room b) Car trained with \subPOm learns to drive through the track (Y-axis: normalized [0-start \& 1-finish]) c) Ant trained with \subPOm learns to explore the domain (Y-axis: normalized with the domain area). Both b,c) show that \subPO scales very well to high dimensional continuous domains.}
\vspace{-1.8em}
\end{figure}

\mypar{Building exploration} The environment consists of two rooms connected by a corridor. The agent at $s$ covers a nearby region $\Discat[s]$ around itself, marked as a green patch in \cref{fig: two_room}. The task is to find a trajectory $\traj$ that maximizes the submodular function $F(\traj) = |\cup_{s\in \traj}\Discat[s]|$. The agent starts in the corridor's middle and has deterministic dynamics. The horizon $H$ is just enough to cover both rooms.
Based on the time-augmented state space, there exists a deterministic Markovian policy that can solve the task. 
However, it is a challenging environment for exploration with Monte Carlo samples using Markovian policies. As shown in \cref{fig: plot_two_room}, \subPO achieves a sub-optimal solution of exploring primarily one side, whereas \subPOnm tracks the history and learns to explore the other room.

\textbf{Car Racing} is an interesting high-dimensional environment, with continuous state-action space, where a race car tries to finish the racing lap as fast as possible \citep{prajapat2021competitive}. The environment is accompanied by an important challenge of learning a policy to maneuver the car at the limit of handling. The track is challenging, consisting of 13 turns with different curvature (\cref{fig: car_track}). The car has a six-dimensional state space representing position and velocities. The control commands are two-dimensional, representing throttle and steering. Detailed information is in the \cref{apx: experiments}. The car is equipped with a camera and observes a patch around its state $s$ as $\Discat[s]$. The objective function is $\Obj(\traj) = |\cup_{s\in \traj}\Discat[s]|$. We set a finite horizon of 700. The \subPOnm will have a large state space of $700 \times 6$, which makes it difficult to train. 
For variance reduction, we use a baseline $b(s)$ in \cref{eqn: grad_esti} that estimates the cumulative sum of marginal gains. 
As shown in \cref{fig: plot_car_racing}, under the coverage-based reward formulation, the agent trained with \mrl tries to explore a little bit but gets stuck with a stationary action at the beginning of the episode to get a maximum modular reward. However, the \subPO agent tries to maximise the marginal again at each timestep and hence learns to drive on the race track (\href{https://youtu.be/jXp0QxIQ--E}{https://youtu.be/jXp0QxIQ--E}). 
Although it is possible to use alternative reward functions to train the car using standard \RL, the main objective of this study is to demonstrate \subPO on the continuous domains and how submodular functions can provide versatility to achieve surrogate goals.

\mypar{MuJoCo Ant} \looseness -1 The task is a high-dimensional locomotion task, as depicted in \cref{fig: mujoco_ant}. The state space dimension is 30, containing information about the robot's pose and the internal actuator's orientation. The control input dimension is 8, consisting of torque commands for each actuator. The Ant at any location $s$ covers locations in 2D space, $\Discat[s]$ and receives a reward based on it. The goal is to maximize $F(\traj) = |\cup_{s\in\traj} \Discat[s]|$. The results depicted in \cref{fig: plt_mujoco} demonstrate that \subPO maximizes marginal gain and learns to explore the environment, while \mrl learns to stay stationary, maximizing modular rewards. The environment carries the core challenge of continuous control and high-dimensional observation spaces. This experiment shows that \subPO can effectively scale to high-dimensional domains.

% 1. grid experiment 
% 2. steiner covering 
% 3. D-experimental design 
% 4. car example (no non-markovian result) 
% 5. mujoco (no non-markovian result) 
\vspace{-3 mm}
\section{Conclusion}
\vspace{-3 mm}
\looseness -1 We introduced a novel framework, \emph{submodular \RL} for decision-making under submodular rewards.
We prove the first-of-its-kind inapproximability result for \subrl i.e., the problem is not just \NP-Hard but intractable even to approximate up to any constant factor. We propose an algorithm, \subPO for this problem and show that under simplified assumptions, it achieves constant-factor approximation guarantees. 
We show that the algorithm exhibits strong empirical performance and scales very well to high-dimensional spaces. 
We hope this work will expand the reach of the \RL community to embrace the broad class of submodular objectives which are relevant to many practical real-world problems.
% Given the general applications of the submodular \RL problems, we hope this work will expand the reach of the \RL community to embrace the broad class of submodular objectives which are relevant for many practical real-world problems

 %and the modelling opportunities they offer.

% Empirically, we show that 

% We theoretically analysed the general problem, proving the first inapproximability result for \subrl.
% Despite the problem's NP-hardness, 
% We presented an algorithm that exhibits strong empirical performance.

\section*{Reproducibility Statement}
We have included all of the code and environments used in this study in the supplementary materials. These resources will be made open-source later on. The attached code contains a README.md file that provides comprehensive instructions for running the experiments. Furthermore, \cref{apx: experiments} contains additional emperical results and the parameters to reproduce the results. Regarding the theoretical results, all the proofs of the propositions and the theorems can be found in the appendix.

\subsubsection*{Acknowledgments}
This publication was made possible by an ETH AI Center doctoral fellowship to Manish Prajapat. We would like to thank Mohammad Reza Karimi, Pragnya Alatur and Riccardo De Santi for the insightful discussions. We thank Bhavya Sukhija and Alizée Pace for reviewing the manuscript.

The project has received funding from the European Research Council (ERC) under the European Union’s Horizon 2020 research and innovation program grant agreement No 815943 and the Swiss National Science Foundation under NCCR Automation grant agreement 51NF40 180545.

\bibliography{ref}

\begin{thebibliography}{54}
\providecommand{\natexlab}[1]{#1}
\providecommand{\url}[1]{\texttt{#1}}
\expandafter\ifx\csname urlstyle\endcsname\relax
  \providecommand{\doi}[1]{doi: #1}\else
  \providecommand{\doi}{doi: \begingroup \urlstyle{rm}\Url}\fi

\bibitem[Abel et~al.(2021)Abel, Dabney, Harutyunyan, Ho, Littman, Precup, and Singh]{abel2021expressivity}
David Abel, Will Dabney, Anna Harutyunyan, Mark~K Ho, Michael Littman, Doina Precup, and Satinder Singh.
\newblock On the expressivity of markov reward.
\newblock \emph{Advances in Neural Information Processing Systems}, 34:\penalty0 7799--7812, 2021.

\bibitem[Baird(1993)]{baird1993}
Leemon~C Baird, III.
\newblock Advantage updating.
\newblock \emph{WRIGHT LAB WRIGHT-PATTERSON AFB OH}, 1993.

\bibitem[Bakker et~al.(1987)Bakker, Nyborg, and Pacejka]{mf}
Egbert Bakker, Lars Nyborg, and Hans~B. Pacejka.
\newblock Tyre modelling for use in vehicle dynamics studies.
\newblock In \emph{SAE Technical Paper}. SAE International, 02 1987.
\newblock \doi{10.4271/870421}.

\bibitem[Balcan \& Harvey(2011)Balcan and Harvey]{balcan2011learning}
Maria-Florina Balcan and Nicholas~JA Harvey.
\newblock Learning submodular functions.
\newblock In \emph{Proceedings of the forty-third annual ACM symposium on Theory of computing}, pp.\  793--802, 2011.

\bibitem[Basu et~al.(2019)Basu, Sen, Sanghavi, and Shakkottai]{basu2019blocking}
Soumya Basu, Rajat Sen, Sujay Sanghavi, and Sanjay Shakkottai.
\newblock Blocking bandits.
\newblock \emph{Advances in Neural Information Processing Systems}, 32, 2019.

\bibitem[Baxter \& Bartlett(2001)Baxter and Bartlett]{baxter2001infinite}
Jonathan Baxter and Peter~L Bartlett.
\newblock Infinite-horizon policy-gradient estimation.
\newblock \emph{journal of artificial intelligence research}, 15:\penalty0 319--350, 2001.

\bibitem[Belogolovsky et~al.(2021)Belogolovsky, Korsunsky, Mannor, Tessler, and Zahavy]{Belogolovsky2021}
Stav Belogolovsky, Philip Korsunsky, Shie Mannor, Chen Tessler, and Tom Zahavy.
\newblock Inverse reinforcement learning in contextual mdps.
\newblock \emph{Machine Learning}, 2021.

\bibitem[Bhandari \& Russo(2019)Bhandari and Russo]{bhandari2019global}
Jalaj Bhandari and Daniel Russo.
\newblock Global optimality guarantees for policy gradient methods.
\newblock \emph{arXiv preprint arXiv:1906.01786}, 2019.

\bibitem[Bian et~al.(2017{\natexlab{a}})Bian, Levy, Krause, and Buhmann]{bian2017continuous}
An~Bian, Kfir Levy, Andreas Krause, and Joachim~M Buhmann.
\newblock Continuous dr-submodular maximization: Structure and algorithms.
\newblock \emph{Advances in Neural Information Processing Systems}, 30, 2017{\natexlab{a}}.

\bibitem[Bian et~al.(2017{\natexlab{b}})Bian, Buhmann, Krause, and Tschiatschek]{greedy-supermodular}
Andrew~An Bian, Joachim~M Buhmann, Andreas Krause, and Sebastian Tschiatschek.
\newblock Guarantees for greedy maximization of non-submodular functions with applications.
\newblock In \emph{International conference on machine learning}, pp.\  498--507. PMLR, 2017{\natexlab{b}}.

\bibitem[Bian et~al.(2017{\natexlab{c}})Bian, Mirzasoleiman, Buhmann, and Krause]{bian2017guaranteed}
Andrew~An Bian, Baharan Mirzasoleiman, Joachim Buhmann, and Andreas Krause.
\newblock Guaranteed non-convex optimization: Submodular maximization over continuous domains.
\newblock In \emph{Artificial Intelligence and Statistics}, pp.\  111--120. PMLR, 2017{\natexlab{c}}.

\bibitem[Bilmes(2022)]{bilmes2022submodularity}
Jeff Bilmes.
\newblock Submodularity in machine learning and artificial intelligence.
\newblock \emph{arXiv preprint arXiv:2202.00132}, 2022.

\bibitem[Chatterji et~al.(2021)Chatterji, Pacchiano, Bartlett, and Jordan]{chatterji2021theory}
Niladri Chatterji, Aldo Pacchiano, Peter Bartlett, and Michael Jordan.
\newblock On the theory of reinforcement learning with once-per-episode feedback.
\newblock \emph{Advances in Neural Information Processing Systems}, 34:\penalty0 3401--3412, 2021.

\bibitem[Chekuri \& Pal(2005)Chekuri and Pal]{chekuri_rg}
Chandra Chekuri and M.~Pal.
\newblock A recursive greedy algorithm for walks in directed graphs.
\newblock In \emph{46th Annual IEEE Symposium on Foundations of Computer Science (FOCS'05)}, pp.\  245--253, 2005.
\newblock \doi{10.1109/SFCS.2005.9}.

\bibitem[Chen et~al.(2017)Chen, Krause, and Karbasi]{Chen2017InteractiveSB}
Lin Chen, Andreas Krause, and Amin Karbasi.
\newblock Interactive submodular bandit.
\newblock In \emph{NIPS}, 2017.

\bibitem[Conforti \& Cornuéjols(1984)Conforti and Cornuéjols]{CONFORTI1984251}
Michele Conforti and Gérard Cornuéjols.
\newblock Submodular set functions, matroids and the greedy algorithm: Tight worst-case bounds and some generalizations of the rado-edmonds theorem.
\newblock \emph{Discrete Applied Mathematics}, 7\penalty0 (3):\penalty0 251--274, 1984.
\newblock ISSN 0166-218X.

\bibitem[Dolhansky \& Bilmes(2016)Dolhansky and Bilmes]{dolhansky2016deep}
Brian~W Dolhansky and Jeff~A Bilmes.
\newblock Deep submodular functions: Definitions and learning.
\newblock \emph{Advances in Neural Information Processing Systems}, 29, 2016.

\bibitem[Duff(2002)]{Duff2002}
Michael~O'Gordon Duff.
\newblock \emph{Optimal learning: Computational procedures for Bayes -adaptive Markov decision processes}.
\newblock PhD thesis, University of Massachusetts Amherst, 2002.

\bibitem[Feige(1998)]{Feige-no-other-effi-algo}
Uriel Feige.
\newblock A threshold of ln n for approximating set cover.
\newblock \emph{J. ACM}, 45\penalty0 (4):\penalty0 634–652, jul 1998.
\newblock ISSN 0004-5411.
\newblock \doi{10.1145/285055.285059}.

\bibitem[Fu(2006)]{FU_score}
Michael~C. Fu.
\newblock Chapter 19 gradient estimation.
\newblock In Shane~G. Henderson and Barry~L. Nelson (eds.), \emph{Simulation}, volume~13 of \emph{Handbooks in Operations Research and Management Science}, pp.\  575--616. Elsevier, 2006.
\newblock \doi{https://doi.org/10.1016/S0927-0507(06)13019-4}.

\bibitem[Funwi-gabga \& Mateu(2011)Funwi-gabga and Mateu]{gorilla-kagwene}
Neba Funwi-gabga and Jorge Mateu.
\newblock Understanding the nesting spatial behaviour of gorillas in the kagwene sanctuary, cameroon.
\newblock \emph{Stochastic Environmental Research and Risk Assessment}, 26, 08 2011.
\newblock \doi{10.1007/s00477-011-0541-1}.

\bibitem[Golovin \& Krause(2011)Golovin and Krause]{golovin2011adaptive}
Daniel Golovin and Andreas Krause.
\newblock Adaptive submodularity: Theory and applications in active learning and stochastic optimization.
\newblock \emph{Journal of Artificial Intelligence Research}, 42:\penalty0 427--486, 2011.

\bibitem[Greensmith et~al.(2004)Greensmith, Bartlett, and Baxter]{greensmith2004variance}
Evan Greensmith, Peter~L Bartlett, and Jonathan Baxter.
\newblock Variance reduction techniques for gradient estimates in reinforcement learning.
\newblock \emph{Journal of Machine Learning Research}, 5\penalty0 (9), 2004.

\bibitem[Gunawan et~al.(2016)Gunawan, Lau, and Vansteenwegen]{GUNAWAN2016315}
Aldy Gunawan, Hoong~Chuin Lau, and Pieter Vansteenwegen.
\newblock Orienteering problem: A survey of recent variants, solution approaches and applications.
\newblock \emph{European Journal of Operational Research}, 255\penalty0 (2):\penalty0 315--332, 2016.
\newblock ISSN 0377-2217.
\newblock \doi{https://doi.org/10.1016/j.ejor.2016.04.059}.

\bibitem[Halperin \& Krauthgamer(2003)Halperin and Krauthgamer]{Halperin2003inapprox}
Eran Halperin and Robert Krauthgamer.
\newblock Polylogarithmic inapproximability.
\newblock In \emph{Proceedings of the Thirty-Fifth Annual ACM Symposium on Theory of Computing}, STOC '03, pp.\  585–594. Association for Computing Machinery, 2003.
\newblock ISBN 1581136749.
\newblock \doi{10.1145/780542.780628}.

\bibitem[Hassani et~al.(2017)Hassani, Soltanolkotabi, and Karbasi]{hassani2017gradient}
Hamed Hassani, Mahdi Soltanolkotabi, and Amin Karbasi.
\newblock Gradient methods for submodular maximization.
\newblock \emph{Advances in Neural Information Processing Systems}, 30, 2017.

\bibitem[Hazan et~al.(2019)Hazan, Kakade, Singh, and Van~Soest]{Hazan2019}
Elad Hazan, Sham Kakade, Karan Singh, and Abby Van~Soest.
\newblock Provably efficient maximum entropy exploration.
\newblock In \emph{International Conference on Machine Learning}, pp.\  2681--2691. PMLR, 2019.

\bibitem[Kakade(2001)]{kakade2001natural}
Sham~M Kakade.
\newblock A natural policy gradient.
\newblock \emph{Advances in neural information processing systems}, 14, 2001.

\bibitem[Karimi et~al.(2017)]{karimi2017stochastic}
Mohammad Karimi et~al.
\newblock Stochastic submodular maximization: The case of coverage functions.
\newblock \emph{Advances in Neural Information Processing Systems}, 30, 2017.

\bibitem[Kegeleirs et~al.(2021)Kegeleirs, Grisetti, and Birattari]{Swarm-SLAM}
Miquel Kegeleirs, Giorgio Grisetti, and Mauro Birattari.
\newblock Swarm slam: Challenges and perspectives.
\newblock \emph{Frontiers in Robotics and AI}, 8:\penalty0 618268, 2021.

\bibitem[Krause \& Golovin(2014)Krause and Golovin]{krause2014submodular}
Andreas Krause and Daniel Golovin.
\newblock Submodular function maximization.
\newblock \emph{Tractability}, 3:\penalty0 71--104, 2014.

\bibitem[Krause et~al.(2008)Krause, Singh, and Guestrin]{sensor-placement-andreas}
Andreas Krause, Ajit Singh, and Carlos Guestrin.
\newblock Near-optimal sensor placements in gaussian processes: Theory, efficient algorithms and empirical studies.
\newblock \emph{J. Mach. Learn. Res.}, 9:\penalty0 235–284, jun 2008.
\newblock ISSN 1532-4435.

\bibitem[Lindner et~al.(2021)Lindner, Turchetta, Tschiatschek, Ciosek, and Krause]{lindner2021information}
David Lindner, Matteo Turchetta, Sebastian Tschiatschek, Kamil Ciosek, and Andreas Krause.
\newblock Information directed reward learning for reinforcement learning.
\newblock In \emph{Proc. Neural Information Processing Systems (NeurIPS)}, December 2021.

\bibitem[Liniger et~al.(2015)Liniger, Domahidi, and Morari]{Liniger2017OptimizationBasedAR}
Alexander Liniger, Alexander Domahidi, and Manfred Morari.
\newblock Optimization-based autonomous racing of 1: 43 scale rc cars.
\newblock \emph{Optimal Control Applications and Methods}, 36\penalty0 (5):\penalty0 628--647, 2015.

\bibitem[Mutn{\'{y}} \& Krause(2021)Mutn{\'{y}} and Krause]{mojmir-cox}
Mojm{\'{\i}}r Mutn{\'{y}} and Andreas Krause.
\newblock Sensing cox processes via posterior sampling and positive bases.
\newblock \emph{CoRR}, abs/2110.11181, 2021.
\newblock URL \url{https://arxiv.org/abs/2110.11181}.

\bibitem[Mutny et~al.(2023)Mutny, Janik, and Krause]{Mutny2023}
Mojmir Mutny, Tadeusz Janik, and Andreas Krause.
\newblock Active exploration via experiment design in markov chains.
\newblock In \emph{International Conference on Artificial Intelligence and Statistics}, pp.\  7349--7374. PMLR, 2023.

\bibitem[Mutti et~al.(2022)Mutti, De~Santi, De~Bartolomeis, and Restelli]{mutti2022challenging}
Mirco Mutti, Riccardo De~Santi, Piersilvio De~Bartolomeis, and Marcello Restelli.
\newblock Challenging common assumptions in convex reinforcement learning.
\newblock \emph{Advances in Neural Information Processing Systems}, 35:\penalty0 4489--4502, 2022.

\bibitem[Nemhauser et~al.(1978)Nemhauser, Wolsey, and Fisher]{nemhauser1978analysis}
George~L Nemhauser, Laurence~A Wolsey, and Marshall~L Fisher.
\newblock An analysis of approximations for maximizing submodular set functions—i.
\newblock \emph{Mathematical programming}, 14\penalty0 (1):\penalty0 265--294, 1978.

\bibitem[Prajapat et~al.(2021)Prajapat, Azizzadenesheli, Liniger, Yue, and Anandkumar]{prajapat2021competitive}
Manish Prajapat, Kamyar Azizzadenesheli, Alexander Liniger, Yisong Yue, and Anima Anandkumar.
\newblock Competitive policy optimization.
\newblock In \emph{Uncertainty in Artificial Intelligence}, pp.\  64--74. PMLR, 2021.

\bibitem[Prajapat et~al.(2022)Prajapat, Turchetta, Zeilinger, and Krause]{near_optimal_safe_cov}
Manish Prajapat, Matteo Turchetta, Melanie Zeilinger, and Andreas Krause.
\newblock Near-optimal multi-agent learning for safe coverage control.
\newblock \emph{Advances in Neural Information Processing Systems}, 35:\penalty0 14998--15012, 2022.

\bibitem[Puterman(1994)]{puterman_MDPbook}
Martin~L. Puterman.
\newblock \emph{Markov Decision Processes: Discrete Stochastic Dynamic Programming}.
\newblock John Wiley \& Sons, Inc., USA, 1st edition, 1994.
\newblock ISBN 0471619779.

\bibitem[Schulman et~al.(2015)Schulman, Levine, Abbeel, Jordan, and Moritz]{schulman2015trust}
John Schulman, Sergey Levine, Pieter Abbeel, Michael Jordan, and Philipp Moritz.
\newblock Trust region policy optimization.
\newblock In \emph{ICML}, pp.\  1889--1897. PMLR, 2015.

\bibitem[Schulman et~al.(2016)Schulman, Moritz, Levine, Jordan, and Abbeel]{schulman2015highdimensional}
John Schulman, Philipp Moritz, Sergey Levine, Michael~I. Jordan, and Pieter Abbeel.
\newblock High-dimensional continuous control using generalized advantage estimation.
\newblock In \emph{4th International Conference on Learning Representations, {ICLR}}, 2016.

\bibitem[Schulman et~al.(2017)Schulman, Wolski, Dhariwal, Radford, and Klimov]{schulman2017proximal}
John Schulman, Filip Wolski, Prafulla Dhariwal, Alec Radford, and Oleg Klimov.
\newblock Proximal policy optimization algorithms.
\newblock \emph{arXiv preprint arXiv:1707.06347}, 2017.

\bibitem[Singh et~al.(2009)Singh, Krause, and Kaiser]{Singh2009}
Amarjeet Singh, Andreas Krause, and William~J. Kaiser.
\newblock Nonmyopic adaptive informative path planning for multiple robots.
\newblock In \emph{Proceedings of the 21st International Joint Conference on Artificial Intelligence}, IJCAI'09, pp.\  1843–1850, San Francisco, CA, USA, 2009.

\bibitem[Streeter \& Golovin(2008)Streeter and Golovin]{streeter2008online}
Matthew Streeter and Daniel Golovin.
\newblock An online algorithm for maximizing submodular functions.
\newblock \emph{Advances in Neural Information Processing Systems}, 21, 2008.

\bibitem[Sutton \& Barto(2018)Sutton and Barto]{sutton2018reinforcement}
Richard~S Sutton and Andrew~G Barto.
\newblock \emph{Reinforcement learning: An introduction}.
\newblock MIT press, 2018.

\bibitem[Sutton et~al.(1999)Sutton, McAllester, Singh, and Mansour]{sutton1999policy}
Richard~S Sutton, David McAllester, Satinder Singh, and Yishay Mansour.
\newblock Policy gradient methods for reinforcement learning with function approximation.
\newblock \emph{Advances in neural information processing systems}, 12, 1999.

\bibitem[Tarbouriech et~al.(2020)Tarbouriech, Shekhar, Pirotta, Ghavamzadeh, and Lazaric]{tarbouriech2020active}
Jean Tarbouriech, Shubhanshu Shekhar, Matteo Pirotta, Mohammad Ghavamzadeh, and Alessandro Lazaric.
\newblock Active model estimation in markov decision processes.
\newblock In \emph{Conference on Uncertainty in Artificial Intelligence}, pp.\  1019--1028. PMLR, 2020.

\bibitem[Tohidi et~al.(2020)Tohidi, Amiri, Coutino, Gesbert, Leus, and Karbasi]{tohidi20}
Ehsan Tohidi, Rouhollah Amiri, Mario Coutino, David Gesbert, Geert Leus, and Amin Karbasi.
\newblock Submodularity in action: From machine learning to signal processing applications.
\newblock \emph{IEEE Signal Processing Magazine}, 37\penalty0 (5):\penalty0 120--133, 2020.
\newblock \doi{10.1109/MSP.2020.3003836}.

\bibitem[Vondrak(2010)]{vondrak2010submodularity}
Jan Vondrak.
\newblock Submodularity and curvature: the optimal algorithm.
\newblock \emph{RIMS Kôkyûroku Bessatsu}, 01 2010.

\bibitem[Wang et~al.(2020)Wang, Zhang, Chaplot, Garagi{\'c}, and Salakhutdinov]{wang2020planning}
Ruosong Wang, Hanrui Zhang, Devendra~Singh Chaplot, Denis Garagi{\'c}, and Ruslan Salakhutdinov.
\newblock Planning with submodular objective functions.
\newblock \emph{arXiv preprint arXiv:2010.11863}, 2020.

\bibitem[Yue \& Guestrin(2011)Yue and Guestrin]{YisongLSB}
Yisong Yue and Carlos Guestrin.
\newblock Linear submodular bandits and their application to diversified retrieval.
\newblock \emph{Advances in Neural Information Processing Systems}, 24, 2011.

\bibitem[Zahavy et~al.(2021)Zahavy, O’Donoghue, Desjardins, and Singh]{Zahavy2021}
Tom Zahavy, Brendan O’Donoghue, Guillaume Desjardins, and Satinder Singh.
\newblock Reward is enough for convex mdps.
\newblock In \emph{35th Conference on Neural Information Processing Systems (NeurIPS 2021)}, 2021.

\end{thebibliography}
\bibliographystyle{iclr2024_conference}

%%%%%%%%%%%%%%%%%%%%%%%%%%%%%%%%%%%%%%%%%%%%%%%%%%%%%%%%%%%%
\newpage
% \section{Supplementary Material}
\appendix
\part{Appendix}
\section{Proof for Propositions}
\label{apx: deterministic_policy}
% For completion, we briefly prove the following propositions about the \mdp with any set function $F$.
\restatedetmarkov \vspace{-0.5em}
\begin{proof} The proof consists of two parts. First, we establish the existence of an optimal deterministic non-Markovian policy for a deterministic MDP $\mathcal{M}$ with non-Markovian rewards $F$. Second, we demonstrate that the trajectory generated by the optimal deterministic \nm policy $(\pi_\nm)$ in $\mathcal{M}$ can also be generated by a deterministic Markovian policy $(\pi_\mv)$. Since both policies yield the same trajectory, resulting in: $J(\pi_{\nm}) = J(\pi_{\mv})$. (Notably, it is sufficient to look at a value of the single induced trajectory since we consider a deterministic policy in a deterministic \mdp with a fixed initial state.)

\emph{i)} We can construct an extended MDP, denoted as $\mathcal{M}_e$, where the state space consists of trajectories $\tau_{0:h}$ for all time steps $h \in [H]$. In $\mathcal{M}_e$, the trajectory rewards $F$ are Markovian. Due to Markovian rewards in $\mathcal{M}_e$, there exists an optimal deterministic Markovian policy for $\mathcal{M}_e$ \citep{puterman_MDPbook} that, when projected back to $\mathcal{M}$, corresponds to a non-Markovian policy $(\pi_\nm)$.

\emph{ii)} Without loss of generality, let $\tau = ((s_0, a_0), (s_1, a_1), \dots, s_H)$ be a trajectory induced by $\pi_{\nm} \in \Pi_{\nm}$. If the states are augmented with time, the same trajectory $\tau$ is induced by a deterministic Markovian policy defined as $\pi_{\mv}(a_i|s_i) = 1$ and $\pi_{\mv}(a_j|s_i) = 0$ for $i \neq j$, $\forall i,j$. Due to the identical induced trajectory, we have $J(\pi_{\mv}) = J(\pi_{\nm})$.
\end{proof}

\restatedetMDP \vspace{-0.5em}
\begin{proof} 
Given any stochastic policy, it is possible to select a state and modify the action distribution at that state to a deterministic action such that the value of the new policy will be at least equal to the value of the original policy. Below we show a construction to ensure the monotonic improvement of the policy,

W.l.o.g., suppose there are two actions, denoted as $a_k$ and $a'_k$, available at a state $s_k$ (the argument remains applicable for any finite number of actions). Consider the objective $J(\pi)$ as shown below (for simplicity, we wrote only the trajectories affected with action at horizon $k$ at state $s_k$),
\begin{align*}
     J(\pi) &= \! \sum_{\traj \in \Gamma}  \mu(s_0) \!\! \left[ \prod_{i=0}^{H-1}  p^i(s_{i+1}|s_i,a_{i})  \pi^i(a_{i}|s_i) \right] \!\! F(\{(s_0,a_0), \hdots (s_k,a_{k}\!) \hdots s_H\}\!) \\
     \!\!\!\!\!\!&= \!\!\!\!\!\!\!\!\!\!\!\! \sum_{\traj \in \Gamma:(s_k,a_{k})\in\traj} \!\!\!\!\!\!\!\!\!\!\!\! \mu(s_0) \!\! \left[ \prod_{i=0, i \neq k}^{H-1}  p^i(s_{i+1}|s_i,a_{i})  \pi^i(a_{i}|s_i) \right] p^k(s_{k+1}|s_k,a_{k}) \underbrace{\pi^k(a_{k}|s_k)}_{x}\! F(\traj)\\
     \!\!\!\!\!\!&+ \!\!\!\!\!\!\!\!\!\!\!\! \sum_{\traj \in \Gamma:(s_k,a'_{k})\in\traj} \!\!\!\!\!\!\!\!\!\!\!\! \mu(s_0) \!\! \left[ \prod_{i=0, i \neq k}^{H-1}  p^i(s_{i+1}|s_i,a_{i})  \pi^i(a_{i}|s_i) \right] p^k(s_{k+1}|s_k,a'_{k}) \underbrace{\pi^k(a'_{k}|s_k)}_{y}\! F(\traj)\\
     &+ constant \tag{remaining terms, do not vary with $x$ and $y$}
\end{align*}
The objective is to maximize $J(\pi)$ subject to simplex constraints. The policy remains fixed for all states except $s_k$. It is important to note that $J$ is a linear function of variables $x$ and $y$. Given that $J$ is linear within the simplex constraints, the optimal solution lies on a vertex of the simplex $(\pi^k(a_{k}|s_k) + \pi^k(a'_{k}|s_k) = 1)$. This vertex represents a deterministic decision for state $s_k$.

We can define a new policy $\pi'$ based on $\pi$ by adjusting the action distribution at $s_k$ to the optimal deterministic action (either $a'_k$ or $a_k$). It is evident that $J(\pi') \geq J(\pi)$. This process can be repeated for all states. By starting with any optimal stochastic policy, we can obtain a deterministic policy that is at least as good as the original stochastic policy. This concludes the proof.
\end{proof}

\newpage
\section{Inapproximability Proof} \vspace{-0.5em}
\label{apx: inapprox}
First, we introduce a set of known hard problems that we will use to establish the hardness of \subrl.

\mypar{Group Steiner tree (\gst)} The input to the group Steiner problem is an edge-weighted graph $G=(V, E, l)$ and $k$ subsets of nodes $g_1, g_2, \hdots, g_k$ called \emph{groups}. Starting from a root node $r$, the goal in \gst is to find a minimum weight tree $T^{\star} = (V(T^{\star}), E(T^{\star}))$ in $G$ such that each group is visited at least once, i.e., $V(T^{\star}) \cap g_i \neq \emptyset, \forall i \in [k]$.

\mypar{Covering Steiner problem (\scp)} The input to \scp is an edge-weighted graph $G=(V, E, l)$ and $k$ subsets of groups $g_1, g_2, \hdots, g_k$, each group has a positive integer $d_i$ representing a minimum visiting requirement. Starting from a root node $r$, the goal in \scp is to find a minimum weight tree $T^{\star} = (V(T^{\star}), E(T^{\star}))$ in $G$ such that the tree covers at least $d_i$ nodes in group $g_i$, i.e., $|V(T^{\star}) \cap g_i| \geq d_i, \forall i \in [k]$. The \scp generalizes \gst problem to an arbitrary constraint $d_i$.

\mypar{Submodular Orienteering Problem (\sop)} In rooted \sop, we are given a root node $r\in V(T)$, and the goal is to find a \emph{walk} $\traj$ of length at most $B$ that maximizes some submodular function $F$ defined on the nodes of the underlying graph. 

\emph{Approximation ratio:} Let $x$ be an input instance of a maximization problem. The approximation ratio $\beta$ is defined as $\beta \geq \frac{\opt(x)}{\algo(x)}$, $\beta \geq 1$, where $\opt$ is the global optimal and $\algo$ is the value attained by the algorithm. The hardness lower bound and approximation upper bounds refer to the lower and upper bound on the approximation ratio $\beta$. 

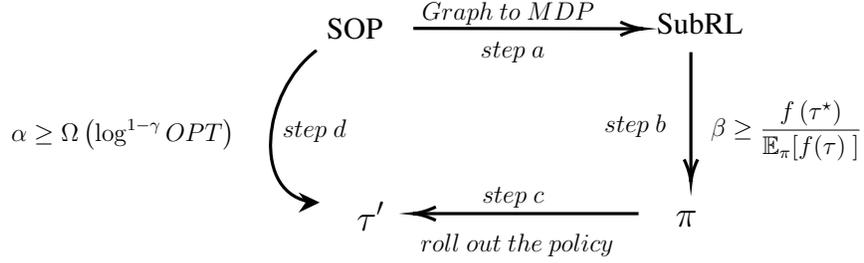
\begin{figure}[t]\vspace{-0.5em}
    \centering
    \scalebox{0.8}{\tikzset{every picture/.style={line width=0.75pt}} %set default line width to 0.75pt        

\begin{tikzpicture}[x=0.75pt,y=0.75pt,yscale=-1,xscale=1]
%uncomment if require: \path (0,621); %set diagram left start at 0, and has height of 621

%Straight Lines [id:da044623699124812344] 
\draw [line width=1.5]    (780,84.13) -- (921.75,84.37) ;
\draw [shift={(924.75,84.38)}, rotate = 180.1] [color={rgb, 255:red, 0; green, 0; blue, 0 }  ][line width=1.5]    (14.21,-4.28) .. controls (9.04,-1.82) and (4.3,-0.39) .. (0,0) .. controls (4.3,0.39) and (9.04,1.82) .. (14.21,4.28)   ;
%Straight Lines [id:da06082320338508107] 
\draw [line width=1.5]    (922,201) -- (783,201) ;
\draw [shift={(780,201)}, rotate = 360] [color={rgb, 255:red, 0; green, 0; blue, 0 }  ][line width=1.5]    (14.21,-4.28) .. controls (9.04,-1.82) and (4.3,-0.39) .. (0,0) .. controls (4.3,0.39) and (9.04,1.82) .. (14.21,4.28)   ;
%Straight Lines [id:da2814232469339797] 
\draw [line width=1.5]    (955,99.13) -- (955,178) ;
\draw [shift={(955,181)}, rotate = 270] [color={rgb, 255:red, 0; green, 0; blue, 0 }  ][line width=1.5]    (14.21,-4.28) .. controls (9.04,-1.82) and (4.3,-0.39) .. (0,0) .. controls (4.3,0.39) and (9.04,1.82) .. (14.21,4.28)   ;
%Curve Lines [id:da05840412220817237] 
\draw [line width=1.5]    (719,98) .. controls (684.08,124.19) and (677.39,189.9) .. (717.17,193.82) ;
\draw [shift={(721,194)}, rotate = 180] [fill={rgb, 255:red, 0; green, 0; blue, 0 }  ][line width=0.08]  [draw opacity=0] (13.4,-6.43) -- (0,0) -- (13.4,6.44) -- (8.9,0) -- cycle    ;

% Text Node
\draw (724,76) node [anchor=north west][inner sep=0.75pt]  [font=\Large] [align=left] {SOP};
% Text Node
\draw (932,73) node [anchor=north west][inner sep=0.75pt]  [font=\Large] [align=left] {SubRL};
% Text Node
\draw (944,198) node [anchor=north west][inner sep=0.75pt]  [font=\LARGE] [align=left] {$\displaystyle \pi $};
% Text Node
\draw (743,192) node [anchor=north west][inner sep=0.75pt]  [font=\LARGE] [align=left] {$\displaystyle \tau '$};
% Text Node
\draw (966,128) node [anchor=north west][inner sep=0.75pt]  [font=\large] [align=left] {$\displaystyle \beta \geq \frac{f\left( \tau ^{\star }\right)}{\mathbb{E}_{\pi }[ f( \tau ) \ ]} \ $};
% Text Node
\draw (784,66) node [anchor=north west][inner sep=0.75pt]  [font=\large] [align=left] {$\displaystyle Graph\ to\ MDP$};
% Text Node
\draw (783,212) node [anchor=north west][inner sep=0.75pt]  [font=\large] [align=left] {$\displaystyle roll\ out\ the\ policy$};
% Text Node
\draw (821,91) node [anchor=north west][inner sep=0.75pt]  [font=\large] [align=left] {$\displaystyle step\ a$};
% Text Node
\draw (899,137) node [anchor=north west][inner sep=0.75pt]  [font=\large] [align=left] {$\displaystyle step\ b$};
% Text Node
\draw (822,183) node [anchor=north west][inner sep=0.75pt]  [font=\large] [align=left] {$\displaystyle step\ c$};
% Text Node
\draw (525,139.98) node [anchor=north west][inner sep=0.75pt]  [font=\large,rotate=-0.01] [align=left] {$\displaystyle \alpha \geq \Omega \left(\log^{1-\gamma } OPT\right)$};
% Text Node
\draw (696,140) node [anchor=north west][inner sep=0.75pt]  [font=\large] [align=left] {$\displaystyle step\ d$};

\end{tikzpicture}}
    \caption{Reduction of \subrl to submodular orienteering problem (\sop)}
    \label{fig:SOPtoSubRL} \vspace{-0.5em}
\end{figure}

\restateinapprox \vspace{-1 em}
\begin{proof} 
We reduce \subrl to rooted \sop, demonstrating the inapproximability of \subrl. \cref{lem: rooted_sop} establishes the hardness of rooted \sop.

Given an instance of \sop with a graph $G=(V, E)$ and a root node $r \in V$, the goal is to find a walk $\traj$ that maximizes a submodular function $F(\traj)$, subject to a budget constraint $|\traj|\leq B$. This can be converted to a \smdp (input to \subrl), $\langle \S, \A, P, \rho, \Obj, H \rangle$ tuple in polynomial time as follows:

% To construct an \smdp  from the given \sop instance, we follow these steps in polynomial time:
\textit{i)} Set $\S \leftarrow H \times V$, $\A \leftarrow E$, $H \xleftarrow{} B$, $\rho(0,r) = 1$ and the submodular function $F$ remains unchanged.
\textit{ii)} Iterate over the edges $e \in E$. Let $e=(v',v)$, set $P((h+1, v')|(h,v),a) = 1$, for all $h \in [H-1]$ where action $a = e$. 

The solution to the \subrl problem is a policy $\pi$ that can be rolled out to obtain a solution for the \sop, which is also a polynomial time operation. By assuming the existence of a polynomial time algorithm for \subrl with an approximation ratio $\beta = o(\log^{1-\gamma} \opt)$, we can approximate \sop with $F(\traj) > \frac{F(\traj^{\star})}{\beta}$ (\cref{fig:SOPtoSubRL}). However, this contradicts the fact that rooted \sop cannot be approximated better than $\Omega(\log^{1-\gamma} \opt)$ (\cref{lem: rooted_sop}). Proved by contradiction. \end{proof} \vspace{-0.5em}

The results shows that there is no algorithm that can guarantee $J(\pi) \geq \frac{\opt}{\log^{1-\gamma}\opt}$ for all the input instances. As \opt increases with the input size of the problem, the ratio $\frac{1}{\log^{1-\gamma} \opt}$ degrades and hence no algorithm can approximate the problem up to any constant factor $c>0$. For the sake of completeness, we include Theorem 4.1 from \cite{chekuri_rg} and modify it for the rooted \sop. The reduction scheme is the same as \cite{chekuri_rg}. 
\begin{lemma}[Theorem 4.1 from \cite{chekuri_rg}] \label{lem: rooted_sop}
    The rooted submodular orienteering problem (SOP) in undirected graphs is hard to approximate within a factor of $\Omega(\log^{1-\gamma} \opt)$ unless $\NP \subseteq \ZTIME(n^{polylog(n)})$.
\end{lemma}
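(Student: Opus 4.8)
The plan is to prove the claimed inapproximability by a gap-preserving reduction from the group Steiner tree problem (\gst), for which the hardness within $\Omega(\log^{2-\gamma} k)$ (for every $\gamma > 0$, unless $\NP \subseteq \ZTIME(n^{polylog(n)})$) is established by Halperin--Krauthgamer \citep{Halperin2003inapprox}, and then to observe that the argument uses only a root vertex and therefore transfers to the \emph{rooted} version of \sop. First I would take a \gst instance $(G=(V,E,l), r, g_1,\dots,g_k)$ and equip the same graph with the monotone submodular \emph{coverage} objective $F(\traj) = |\{ i \in [k] : g_i \cap \traj \neq \emptyset \}|$, which counts how many groups the walk $\traj$ hits. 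This is a standard coverage function and hence monotone submodular, so it is a legitimate \sop objective on the same graph.

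The bridge between the two problems is the correspondence between bounded-weight trees and bounded-length walks. Any rooted tree of weight $W$ can be turned into a closed walk from $r$ of length at most $2W$ by an Euler tour that doubles each edge, and conversely the support of any rooted walk of length $B$ is a connected subgraph through $r$ of total weight at most $B$. Consequently, if $W^{\star}$ is the optimal \gst weight, then setting the \sop budget to $B = 2W^{\star}$ guarantees that a budget-$B$ walk can cover \emph{all} $k$ groups, i.e.\ $\opt = k$; since $W^{\star}$ is unknown I would enumerate the polynomially many candidate values of $B$ and keep the best outcome.

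The quantitative heart of the reduction is a set-cover--style amplification. Given a purported $\beta$-approximation \algo for \sop with $\beta = o(\log^{1-\gamma}\opt)$, each invocation at budget $B$ returns a rooted walk covering at least a $1/\beta$ fraction of the still-uncovered groups, because the optimal tree still covers those groups within budget $B$. Running $\BigO(\beta \log k)$ such rounds and taking the union of the returned walks (all rooted at $r$, hence jointly forming a connected subgraph) covers every group at total weight $\BigO(\beta \log k)\cdot 2W^{\star}$, i.e.\ an $\BigO(\beta \log k)$-approximation for \gst. If $\beta = o(\log^{1-\gamma} k)$ this yields a $o(\log^{2-\gamma} k)$-approximation for \gst, contradicting the Halperin--Krauthgamer bound; since $\opt \le k \le |V|$ so that $\log k = \Theta(\log \opt)$, this is exactly the claimed $\Omega(\log^{1-\gamma}\opt)$ threshold. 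The extra $\log k$ factor absorbed by the repetition is precisely what shifts the \gst exponent $2-\gamma$ down to $1-\gamma$.

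I expect the main obstacle to be making this last step genuinely gap-preserving: one must argue on the promise (gap) version of \gst, verify that removing already-covered groups leaves an instance whose optimum is still bounded by $W^{\star}$ (so each round keeps making $1/\beta$ multiplicative progress), and confirm that the budget-guessing loop together with the union-of-walks construction does not inflate the weight beyond the stated factor. The remaining adaptation to the rooted setting is routine, since dropping the terminal vertex of the s--t version in \citep{chekuri_rg} only removes a constraint and the Euler-tour argument already produces walks that begin (and may return to) $r$.
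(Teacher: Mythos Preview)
Your proposal is correct and follows essentially the same reduction as the paper: both invoke the Halperin--Krauthgamer \gst hardness, use the coverage-type submodular objective, convert trees to walks via an Euler tour (losing a factor~$2$), guess the budget, and iterate the \sop oracle in a set-cover fashion so that a $\beta$-approximation for \sop yields an $\BigO(\beta\log k)$-approximation for \gst. The only cosmetic differences are that the paper routes the argument through \scp (then specializes to $d_i=1$) and uses binary search for the budget, whereas you work with \gst directly and enumerate budgets; neither choice affects the bound.
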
 \vspace{-0.5em}
\begin{proof} The group Steiner problem (\gst) is hard to approximate to within a factor of $\Omega(\log^{2-\gamma} \opt)$ unless \NP has quasi-polynomial time Las Vegas algorithms \citep{Halperin2003inapprox}. 
We reduce the problem of rooted \sop to \gst, proving the inapproximability of rooted \sop. This represents that if we have an efficient algorithm for \sop, then we can recover a solution for \gst by using the same \sop algorithm.

\mypar{Submodular function $F$} Given an \scp instance, define a submodular function $F(S) = \sum_{i=1}^{k} \min(d_i, |S\cap g_i|)$. $F$ is a monotone submodular set function. 

Consider an optimal solution of \scp as $T^{\star}$ of cost \opt.
% \manish{This cost can be integral weight or fraction.}.  
We can take an Euler tour of the tree $T^{\star}$ and obtain a tour from $r$ of length at most $2 \opt$ that covers all groups. %\manish{TODO: Draw a figure to explain this}

% Either explain how does an algorithm for \sop will solve \scp or how can a \scp problem will be casted to \sop problem.
\mypar{Reduction} We will reduce rooted \sop problem to the \scp (\scp generalises \gst with any $d_i$>0). Let's say we have an algorithm $\A$ for \sop with $\Omega(\log \sum_i d_i)$. ($\sum_i d_i$ is optimal value for \sop). 
In a single iteration, $\A$ will generate a walk that covers $f(V(T^{\star})/\log f(V(T^{\star}))$ of length $B$, which can be converted to a tour P of length at most $2B$.
We can remove the nodes in $P$ and reduce the coverage requirement of the groups that are partially covered and repeat the above procedure. Using \cref{lem: stand_set_cover_logk}, all groups will be covered up to the requisite amount in $\BigO(\log^2 \sum_i d_i)$ iterations. Combining all the tours yields a tree of length $\BigO(\log^2 \sum_i d_i ) B$ that is a "feasible solution" of the \scp.
% \manish{what does tree length mean? $\BigO(\log^2 \sum_i d_i ) 2B$?} 
$B$ can be evaluated using binary search and is within a constant factor of \opt. %(Think: If B = 1, then the method would not find a solution, if B = \opt then it should be enough? since optimal solution can't be of length more than \opt and \sop shall also return a solution within B) \manish{Ask Mojmir about binary search. B=K\opt, then in approximability ratio \opt will cancel out. This results in a constant factor in the approximability ratio.} 
When specialized to the \gst, i.e., $d_i=1$, the approximability ratio becomes $\BigO(\log^2 k)$. 

\mypar{Contradiction} Following the reduction above, assuming an algorithm $\A$ for \sop with an approximation ratio of $\log k$ results in $\BigO(\log^2 k)$ approximation ratio for \gst. Hence, an $\alpha=o(\log k)$ approximation algorithm for \sop will give an approximation of $\BigO(\alpha \log k)$ for \gst. But \gst is hard to approximate to within a factor of $\Omega(\log^{2-\gamma} \opt)$. Hence \sop is hard to approximate within a factor of $\Omega(\log^{1-\gamma} \opt)$.
% \manish{Give emphasis on $\alpha$-approximation for \sop in undirected graphs implies an $\BigO(\alpha \log k)$ approximation for the group Steiner problem in undirected }
% \begin{itemize}
%     \item overall explain how GST is solved using SOP
% \end{itemize}   
\end{proof}

\begin{lemma} \label{lem: stand_set_cover_logk}
    A algorithm $\A$ for \sop with approximability ratio $\Omega(\log k)$ can cover $k$ nodes after $\BigO(\log^2 k)$ iterations.
\end{lemma}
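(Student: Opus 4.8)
\textbf{Proof proposal for \cref{lem: stand_set_cover_logk}.}

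The plan is to run a biased set-cover–style greedy argument that tracks the \emph{total residual coverage requirement} rather than a single set, which is what the logarithmic factor naturally controls. Concretely, I would define a potential $\Phi_t = \sum_{i=1}^k \bigl(d_i - |P_{\le t} \cap g_i|\bigr)_+$ equal to the total remaining visiting demand across all $k$ groups after the walks produced in the first $t$ iterations of $\A$, where $(\cdot)_+$ denotes the positive part. Initially $\Phi_0 = \sum_i d_i$. The submodular objective $F(S) = \sum_i \min(d_i, |S \cap g_i|)$ from the reduction is exactly the ``satisfied'' demand, so $\Phi_t = \bigl(\sum_i d_i\bigr) - F(P_{\le t})$, i.e.\ $\Phi_t$ is the gap between $F$ and its maximum value. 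The goal is to show that all demand is met, $\Phi_t = 0$, after $\BigO(\log^2 k)$ iterations.

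First I would establish a multiplicative decrease of $\Phi_t$ per iteration. In a single invocation, $\A$ returns a walk achieving coverage at least $F(V(T^\star))/\log F(V(T^\star))$ on the \emph{current} residual instance (this is precisely the $\Omega(\log k)$-approximation guarantee stated in the lemma, applied to the instance with the already-satisfied demand removed). Because the Euler tour of the optimal Steiner tree $T^\star$ certifies that \emph{all} residual demand $\Phi_{t-1}$ is simultaneously coverable by a walk of the admissible length, the optimum of the residual \sop instance is at least $\Phi_{t-1}$. Hence each iteration satisfies at least a $\Omega(1/\log k)$ fraction of the remaining demand, giving the recursion
\begin{align}
\Phi_t \;\le\; \Bigl(1 - \tfrac{1}{c\log k}\Bigr)\,\Phi_{t-1}
\end{align}
for some absolute constant $c$, where I bound $\log F(\cdot) \le \log\bigl(\sum_i d_i\bigr) = \BigO(\log k)$ under the standing assumption that the $d_i$ are polynomially bounded (so $\sum_i d_i = \mathrm{poly}(k)$ and its logarithm is $\BigO(\log k)$).

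Second I would unroll the recursion: after $t$ iterations, $\Phi_t \le \Phi_0 \exp(-t/(c\log k)) = \bigl(\sum_i d_i\bigr)\exp(-t/(c\log k))$. Since $\Phi_t$ is a nonnegative integer, once $\Phi_t < 1$ we must have $\Phi_t = 0$ and every group is covered to its requirement. Solving $\bigl(\sum_i d_i\bigr)\exp(-t/(c\log k)) < 1$ gives $t > c\,\log k \cdot \log\bigl(\sum_i d_i\bigr) = \BigO(\log^2 k)$, which is the claimed iteration count. I would then remark that after each iteration the nodes on the produced tour are removed and the group requirements reduced, so the residual instance handed to $\A$ is a genuine smaller \scp instance, legitimizing the repeated application.

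The main obstacle I expect is justifying the per-iteration progress cleanly: specifically, arguing that the residual optimum stays as large as $\Phi_{t-1}$ so that the $\Omega(1/\log k)$-fraction decrease is against the \emph{current} remaining demand (not the original), and handling the integrality rounding at the tail where $\Phi_t$ is small. The Euler-tour feasibility certificate is the key device that keeps the residual optimum large, and the integer-valuedness of $\Phi_t$ is what lets me stop at $\Phi_t < 1$ rather than chasing $\Phi_t \to 0$. A secondary subtlety is the $\log F(\cdot)$ versus $\log k$ bookkeeping, which is benign provided the $d_i$ are polynomially bounded, and I would state that assumption explicitly.
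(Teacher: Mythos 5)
Your proposal is correct and takes essentially the same route as the paper: the paper's proof tracks the residual count $L_n$ via the recursion $L_{n+1} = L_n - L_n/\log L_n$, bounds each factor $(1 - 1/\log L_n) \leq e^{-1/\log k}$, unrolls to $k\,e^{-n/\log k}$, and stops once this drops below $1$ — precisely your potential-function recursion with the same integrality-based termination. The only cosmetic differences are that the paper interprets $k$ as the total residual demand itself (so your polynomial-boundedness assumption on the $d_i$ is not needed there), and your explicit Euler-tour feasibility certificate is handled in the paper's surrounding reduction (\cref{lem: rooted_sop}) rather than inside this lemma's proof.
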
\vspace{-0.5em}
\begin{proof} Let $L_{n}$ be the nodes available after the $n^{th}$ iteration with an algorithm having $\beta \geq \Omega(\log k)$.
    \begin{align*}
        L_0 &\xleftarrow[]{} k\\
        L_1 &\xleftarrow[]{} L_0 - \frac{L_0}{\log L_0} \\
        &\vdots\\
        L_{n+1} &\xleftarrow[]{} \underbrace{L_n - \frac{L_n}{\log L_n}}_{x - \frac{x}{\log x}} \\
    \end{align*} \vspace{-0.5em}
In the first iteration with $x=k$
\begin{align*}
    L_1 = x - \frac{x}{\log x} = x(1-\frac{1}{\log x}) \leq x e^{\frac{-1}{\log x}} = k e^{\frac{-1}{\log k}}.
\end{align*}
By definition, $L_1 \geq L_2 \geq \hdots L_n$, 
hence $L_i \leq k e^{-1/\log k} ~\forall i \in [1,n]$. \\
Nodes available after $n^{th}$ iteration : $L_0 (1-\frac{1}{\log L_0}) (1-\frac{1}{\log L_1}) \hdots (1-\frac{1}{\log L_n})$.

\begin{align*}
    L_0 (1-\frac{1}{\log L_0}) (1-\frac{1}{\log L_1}) \hdots (1-\frac{1}{\log L_n}) &\leq k \underbrace{e^{-1/\log k}\times e^{-1/\log k} \hdots e^{-1/\log k}}_{n~~times}\\
    &= k e^{-n/ \log k} 
\end{align*}

for $n > \log^2 k$, nodes available: $k e^{-n/\log k} < k e^{-\log^2 k/\log k} = k e^{- \log k} = 1$. Hence Proved.
% Hence with algorithm $\A$ having approximability ratio $\Omega(\log k)$, after $\BigO(\log^2 k)$ iterations no nodes are available.
\end{proof}

%The assumption "unless $\NP \subseteq \ZTIME(n^{polylog(n)})$" is a slightly stronger assumption than the common usual $\small{\textsc{P}} \neq \NP$, implying there is no deterministic algorithm in time $n^{polylog(n)}$ that can solve all the problems in $\NP$. However, this is also a widely 

% How critical is it that $F$ is a submodular function for \subPO ?

\section{Discussion}

\mypar{Submodularity} Since the algorithm and the theoretical hardness result readily extend to general set functions beyond submodular rewards, a natural question that arises is how critical is that $F$ is a submodular function and what can we say beyond submodular rewards? In this work, submodularity emerges in the lower bound (inapproximability hardness), implying that the problem is not just intractable but intractable even to approximate up to any constant factor. Additionally, it emerges in the upper bound of $1-c$ under curvature assumption and in the upper bound of $(1-1/e)$ under the simplified \smdp setting. There cannot exist an algorithm for bandit \smdp with better guarantees \citep{Feige-no-other-effi-algo}, and \subPO is able to achieve the optimal ratio $(1-1/e)$, thus utilising submodularity to provide intuition on why \subPO (a REINFORCE type strategy) is a right strategy.

Overall, submodularity lets us characterise the spectrum of the computational complexity of the SubRL framework, while some results, e.g., our algorithm \subPO, inapproximability hardness, naturally carry over to the general non-Markovain rewards beyond submodular $F$ (general non-additive reward function).

\mypar{Policy class} The restriction to Markovian policies in the theoretical limits section is mainly for emphasizing the “hardness result” even for the simple policy class, implying the source of hardness is not the representation of non-Markovian policy (which is an exponential object itself). The overall goal is to learn a policy that achieves a higher objective value; hence, we do not, in general, restrict it to the Markovian policy class. We treat the problem of learning state representation separately, which can be done, e.g. with RNN, and is an add-on to the \subPO, e.g. \subPOnm optimises in a non-Markovian policy class.

\mypar{Expressivity of rewards} The optimal policies for submodular rewards cannot be captured by the Markovian rewards in general since the optimal policies are non-Markovian. 
However, when the policy search is restricted to the Markovian class, the optimal policy is deterministic (\cref{prop:restatedetMDP}), and hence there exists a Markovian reward that would lead to the same optimal policy. But this does not help to solve the problem since finding such Markovian rewards has to be NP-hard to approximate \cref{thm: inappx}.

In contrast to finding such surrogate Markovian rewards, submodularity provides a natural way to capture the task. Moreover, since we do not restrict to the Markovian policies, given a policy class with compact history representation, \subPO can learn behaviours beyond the expressivity of the Markovian rewards \citep{abel2021expressivity}.

\mypar{Applications} Since submodularity is a natural characterization of diminishing returns, numerous tasks involving exploration or discouraging repeated actions can be captured via submodular functions. In addition to our experiments discussing experiment design, item collection and coverage objectives, the following \cref{tab:subrlapplications} provides a summary of problems that can be addressed with \subrl.

% \subsection{Application of \subrl}
\begin{table}[!h]
\hspace{-5em}
\setlength{\arrayrulewidth}{0.5mm}
\setlength{\tabcolsep}{0pt}
\renewcommand{\arraystretch}{1.6}
% \begin{tabular}{ | c{5em} | c{1em} | c{1em} | } 
\begin{tabular}{  c  c  c  } 
  \hline
 Tasks & Relevant works & Submodular reward function $F(\tau)$  \\ 
  \hline 
State entropy exploration  & \citep{Hazan2019} & $
        F(\tau) = \frac{-1}{|\tau|} \sum_{v \in \V} \mathbb{I}_{(v,\cdot)\in\tau} \log \frac{|\{t: (v,t)\in \tau \} |}{|\tau|} 
    $  \\[0.1em] 
D-Optimal Experimental Design &\citep{Mutny2023} &$F(\tau) = I(y_\tau;f)$, $I(y;f) = H(y_\tau) - H(y_\tau | f)$ \\
Steiner covering & \citep{chekuri_rg} & $F(\traj) = \sum\nolimits_{i \in \mathcal{G}} \min(|\traj \cap g_i|, d_i)$, pick $d_i$ items of group $g_i$  \\
% Goal reachability & &$|\tau \cap S_g|$ \\
State coverage functions & \citep{near_optimal_safe_cov}& $F(\tau) = \sum_{v \in \V} | \{ t \in [H]:(v,t)\in \tau \}|$, $F(\tau) =|\bigcup_{v\in \tau} D^v|$  \\
Weighted coverage function & \citep{karimi2017stochastic} & $F(\tau) = g(\bigcup_{s \in \tau} D^s)$, $g(V) = \sum_{v \in V} \rho(v)$ \\
Discourage repeated action/ & \citep{basu2019blocking} & \!\!\!\!\!\!$F(\tau) =|\bigcup_{s\in \tau} D^s|$, e.g., $s\!=\!(v,t)$ and \\[-0.4em]
(including coverage on Time) & &$\!D^s\!\!\coloneqq\!\{(v,t),\!(v,t+1),\!(v,t+2)\}$\\
Log determinant objectives & \citep{wang2020planning} & $F(\traj) = \log \det \left(\sum_{s\in \tau} F(\{s\}) + \lambda I \right)$\\
Facility location & \cite{krause2014submodular} & $F(\tau) = \sum_{i=1} \max_{j \in \tau} M_{i,j}$, $M_{ij} \geq 0$\\
% Exploration under time-varying process & $F(\tau) = \sum_{v \in V} \max \{ \alpha, \min\{ |S \cap S_v|, 1 \} \}, 0 \leq \alpha \leq 1$.\\
\hline
\end{tabular} 
\caption{ A few examples that can be tackled with submodular reinforcement learning framework}
\label{tab:subrlapplications}
\end{table}
\newpage
\section{Submodular policy optimization, \subPOh's policy gradient proof}
\label{apx: PG}
\restatePG
\begin{proof} The performance measure is given by,
\begin{align*}
   J(\pi_{\theta}) &= \E_{\traj \sim f(\traj ; \pi_{\theta})} [F(\traj)] = \sum_{\traj} f(\traj ; \pi_{\theta}) F(\traj)\\
\shortintertext{thus, the gradient with respect to $\theta$ is given by}
   \nabla_{\theta} J(\pi_{\theta}) &= \sum_{\traj} \nabla_{\theta} f(\traj ; \pi_{\theta}) F(\traj) 
\end{align*}
For any $p_{\theta}(\tau) \neq 0$ using log trick, $\nabla_{\theta} \log p_{\theta}(\tau) = \frac{\nabla_{\theta} p_{\theta}(\tau)}{p_{\theta}(\tau)}$ from standard calculus and the definition of $f(\tau;\pi_\theta)$ in \eq~\ref{eq: trajectory_distribution}, we can compute the gradient of the objective. Let us define $g(\traj; \pi_\theta) = \nabla_{\theta} (\log \prod_{i=0}^{\horizon-1}\pi_{\theta}(a_i|s_i))$ resulting in \cref{prop: score},
\begin{align*}
\nabla_{\theta} J(\pi_{\theta}) &= \sum_{\tau} f(\tau;\pi_\theta) \nabla_{\theta} (\log \prod_{i=0}^{\horizon-1}\pi_{\theta}(a_i|s_i)) F(\traj) = \sum_{\tau} f(\tau;\pi_\theta) g(\traj; \pi_\theta) F(\traj)\\
 &= \E_{\traj \sim f(\traj ; \pi_{\theta})} \left[ \left(\sum_{i=0}^{H-1} \nabla_{\theta} \log \pi_{\theta}(a_i|s_i) \right) F(\traj) \right] \numberthis\\
\end{align*}
Using marginal gain $F(s|\traj_{0:j}) = F(\traj_{0:j}\cup \{s\}) - F(\traj_{0:j})$ and telescopic sum $\sum_{j=0}^{H-1}F(s_{j+1}|\traj_{0:j}) = F(\traj) - F(s_0)$,
\begin{align*}
   \nabla_{\theta} J(\pi_{\theta}) &= \E_{\traj \sim f(\traj ; \pi_{\theta})} \left[ \left(\sum_{i=0}^{H-1} \nabla_{\theta} \log \pi_{\theta}(a_i|s_i) \right) \left(\sum_{j=0}^{H-1}F(s_{j+1}|\traj_{0:j}) + F(s_0)\right) \right] \\
    % \E\limits_{\traj \sim f(\traj ; \pi_{\theta})}  \left[  \left(\sum_{i=0}^{H-1} \nabla_{\theta} \log \pi_{\theta}(a_i|s_i) \right) \left( \sum_{j=0}^{H-1} F(s_{j+1}|\traj_{0:j}) + F(s_0) \right) \right] \\
    &= \E\limits_{\traj \sim f(\traj ; \pi_{\theta})} \left[ \sum\limits_{i=0}^{H-1} \nabla_{\theta} \log \pi_{\theta}(a_i|s_i) \left(\sum\limits_{j=0}^{H-1}F(s_{j+1}|\traj_{0:j}) + F(s_0) \right) \right]\\
    \intertext{For any function of partial trajectory up to $i$, $b'(\traj_{0:i})$, we have, $\sum\limits_{a_i}  \pi_{\theta}(a_i|s_i) \nabla_{\theta} \log \pi_{\theta}(a_i|s_i) b'(\traj_{0:i})$ $=  \sum\limits_{a_i} \nabla_\theta \pi_{\theta}(a_i|s_i) b'(\traj_{0:i}) = 0$. Thus one can subtract any history-dependent baseline without altering the gradient estimator,}
    &= \E\limits_{\traj \sim f(\traj ; \pi_{\theta})} \left[ \sum\limits_{i=0}^{H-1} \nabla_{\theta} \log \pi_{\theta}(a_i|s_i) \left(\sum\limits_{j=0}^{H-1}F(s_{j+1}|\traj_{0:j}) + F(s_0) - b'(\traj_{0:i}) \right) \right] \\
    &= \E_{\traj \sim f(\traj ; \pi_{\theta})} \left[ \sum_{i=0}^{H-1} \nabla_{\theta} \log \pi_{\theta}(a_i|s_i) \left(\sum_{j=i}^{H-1}F(s_{j+1}|\traj_{0:j})\right) \right]\\
    % &= \E_{\traj \sim f(\traj ; \pi_{\theta})} \left[ \sum_{i=0}^{H-1} \nabla_{\theta} \log \pi_{\theta}(a_i|s_i) \left(\sum_{j=i}^{H-1}F(s_{j+1}|\traj_{0:j})\right) \right]  \tag{Use Causality} \label{eqn: causality}\\
    \intertext{Finally, we can subtract a baseline again using a similar trick as above and we get the theorem statement:}
    &= \E_{\traj \sim f(\traj ; \pi_{\theta})} \left[ \sum_{i=0}^{H-1} \nabla_{\theta} \log \pi_{\theta}(a_i|s_i) \left(\sum_{j=i}^{H-1}F(s_{j+1}|\traj_{0:j}) - b(\traj_{0:i}) \right) \right]
\end{align*}
\end{proof}

\newpage
\section{Provable Guarantees in Simplified Settings}
\label{apx: dr_sub}
\subsection{DR-Submodularity Proof}
\begin{figure}
    \centering
    \scalebox{0.6}{\input{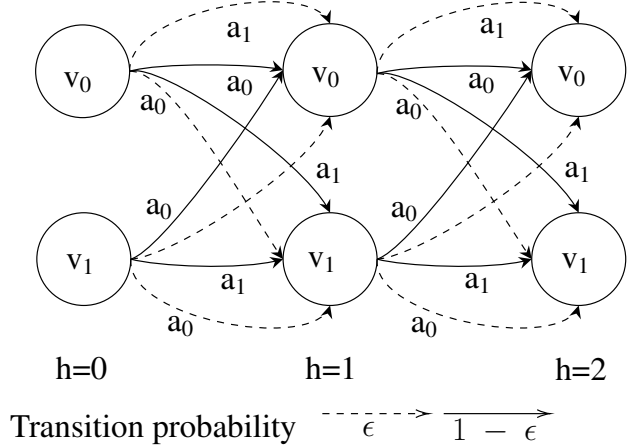}}
    \caption{Nomenclature: With action, $a_j$ from any state the agent jumps to state $s_j$ with probability $1-\epsilon$ (solid), and with probability $\epsilon$ it jumps to any other state uniformly (dashed)}
    \label{fig:mdp}
\end{figure}
\restateDRsub
\begin{proof}
$\epsilon$-Bandit \mdp considers \emph{state independent transitions} (only horizon dependent), i.e., $v_j, v_k \in \V$, $j \neq k$, $\forall h \in [H]$ and $\forall v' \in \V$, $P_h(v_j|v',a_{j})= 1-\epsilon_h$, and $P_h(v_k|v',a_{j}) = \frac{\epsilon_h}{|\V|-1}$ for $\epsilon_h \in \left[0, \frac{|\V|}{|\V|+1}\right]$. For simplicity, we consider a fixed size $\V$, but it can also vary with the horizon. To denote explicit dependent on the horizon, we use $P_h$ and $v$ instead of directly $s$.

Similarly, policy parameterization in \cref{thm: dr_sub} considers \emph{state independent policy} (only horizon dependent) $\pi^h(a|v') = \pi^h(a|v'') \forall h\in [H], \forall v',v'' \in \V$, in short notation we denote them as $\pi^h(a)$. 

Note that $\pi^{h}(a_{i}|v_i)$ corresponds to the self-loop actions ("stay"). In the proof, we reparameterize the probability of self-loop actions with that of other actions (i.e., $\pi^{h}(a_{i}|v_i) = 1 - \sum_{a \neq a_i}\pi^h(a|v_i)$), resulting in relaxation of the simplex constraint $\left(\sum_{a} \pi^h(a|v)=1, \forall h , v \to \sum_{a \neq a_i} \pi^h(a|v_i) \leq 1 \right)$. %of technique: we will substitute  and differentiate \cref{eq: obj} wrt $\pi^{h}(a_{i+1}|s_i)$. 

In the following, for ease of notation, we denote $v_i\coloneqq (i,v)$, in particular, $F((v'_i,a_i)_{i=0}^{H-1}, v_H) \coloneqq F((i,v',a_i)_{i=0}^{H-1}, (H,v))$. Consider the objective $J(\pi)$,
\begin{align*}
    J(\pi) &= \sum_{\traj \in \Gamma} \mu(v_0) \prod_{h=0}^{H-1} p_h(v_{h+1}|v_h,a_{h}) \pi^h(a_{h}|v_h) F((v_i,a_i)_{i=0}^{H-1}, v_H) \\ %\!\! F(\{(s_0,a_0), \hdots \underbrace{(s_i,a_{i}\!),(s_{i+1},a_{i+1}\!),(s_{i+2},a_{i+2}\!)}_{A} \hdots s_H\}\!) 
    &= \sum_{\traj \in \Gamma} \mu(v_0) \prod_{h=0}^{H-1} p_h(v_{h+1}|a_{h}) \pi^h(a_{h}) F((v_i,a_i)_{i=0}^{H-1},v_H) \tag{state independent assumptions}
\end{align*}
% Note in the equation above notation is overloaded, $v_h$ is a state at horizon $h$ and also a state where a unique action $a_h$ is a self-loop one. %$v_{\traj_h}?$

We show DR-submodularity by showing $\forall \pi \in \mP, \frac{\partial^2 J(\pi)}{\partial \pi' \partial \pi''} \leq 0, \forall \pi', \pi''\in \mP$. 
% For this, consider the gradient of the objective, (
We first reparameterize the self-loop actions (which bring the agent back to the same state) in $J(\pi)$ by substituting $\pi^{h}(a^l_{h}|v^l_h) = 1 - \sum_{a \neq a^l_h}\pi^h(a|v^l_h)$ in $J(\pi)$. Here $a^l$ is a looping action for state $v^l$ at horizon $h$.
% Consider $J(\pi)$ with all the self-loop actions (which bring back to the same state) being reparameterized as per $\pi^{h}(a^l_{h}|v^l_h) = 1 - \sum_{a \neq a^l_h}\pi^h(a|v^l_h)$. 

First, we prove monotonicity of $J(\pi)$ by showing $\frac{\partial J(\pi)}{\partial \pi^{h}(a'_{h})} \geq 0$. %We consider the terms in $\frac{\partial J(\pi)}{\partial \pi^{h}(a'_{h})}$ corresponding to the state $v^l$ and show that they are $\geq 0$. This can be sum over all the states to get $\frac{\partial J(\pi)}{\partial \pi^{h}(a'_{h})} \geq 0$.
\begin{align*}
    \frac{\partial J(\pi)}{\partial \pi^{h}(a'_{h})} = \sum_{v^l\in \V}\frac{\partial J_{v^l}(\pi)}{\partial \pi^{h}(a'_{h})}, \mathrm{where},
\end{align*}
\begin{align*}
% J(\pi) &= \sum_{\traj \in \Gamma} \mu(s_0) \left[\prod_{i=0}^{H-1} p^i(s_{i+1}|s_i,a_{i+1}) \pi^{i}(a_{i+1}|s_i) \right] F(\{(s_0,a_1), (s_1,a_2), \hdots s_H\})\\
& \frac{\partial J_{v^l}(\pi)}{\partial \pi^{h}(a'_{h})} = \\%\!\!\!\!\!\!\!\sum_{\substack{\traj \in \Gamma:(v_h,a'_{h})\in\traj,\\ \forall v \in |\V|, v \neq v^l}} \!\!\!\!\!\!\!\!\!\!\!\! \mu(v_0) \!\! \prod_{i=0}^{H-1} \!\! p_i(v_{i+1}|a_{i}) \!\! \left[\!\prod_{h\neq i}^{H-1}  \!\!\! \pi^h(a_{h}) \right] \!\! F(\traj) - \!\!\!\!\!\!\!\!\!\!\!\!\sum_{\substack{\traj \in \Gamma:(v_h,a^l_{h})\in\traj,\\ \forall v \in |\V|, v \neq v^l}} \!\!\!\!\!\!\!\!\!\!\!\! \mu(v_0) \!\! \prod_{i=0}^{H-1} \!\! p_i(v_{i+1}|a_{i}) \!\! \left[\!\prod_{h\neq i}^{H-1}  \!\!\! \pi^h(a_{h}) \right] \!\! F(\traj) \\ 
\!\!\!\!\!\!\!& \!\!\!\!\!\! \sum_{\traj \in \Gamma:(v^l_h,a'_{h})\in\traj} \!\!\!\!\!\!\!\!\!\!\!\! \mu(v_0) \!\! \prod_{i=0}^{H-1} \!\! p_i(v_{i+1}|a_{i}) \!\! \left[\!\prod_{h\neq i}^{H-1}  \!\!\! \pi^h(a_{h}) \right] \!\! F((v_0,a_0), \hdots \underbrace{(v^l_h,a'_{h}\!),(v'_{h+1},a_{h+1}\!),(v_{h+2},a_{h+2}\!)}_{A} \hdots v_H\!)\\
\!\!\!\!\!\!\!&- \!\!\!\!\!\!\!\!\!\!\!\! \sum_{\traj \in \Gamma:(v^l_h,a^l_{h})\in\traj} \!\!\!\!\!\!\!\!\!\!\!\! \mu(v_0) \!\! \prod_{i=0}^{H-1} p_i(v_{i+1}|a_{i}) \!\!\left[\!\prod_{h\neq i}^{H-1} \!\!\! \pi^h(a_{h}) \right] \!\! F((v_0,a_0), \hdots \underbrace{(v^l_h,a^l_{h}),(v^l_{h+1},a_{h+1}),(v_{h+2},a_{h+2})}_{B} \hdots v_H\!) 
\intertext{
We prove $\frac{\partial J_{v^l}(\pi)}{\partial \pi^{h}(a'_{h})} \geq 0$ for any $v^l$, which implies $\frac{\partial J(\pi)}{\partial \pi^{h}(a'_{h})}\geq0$. Note: For every trajectory in $E \coloneqq \{\traj \in \Gamma:(v^l_h, a'_{h})\in\traj\}$ ($1^{st}$ summation), we have a trajectory in $L \coloneqq \{\traj \in \Gamma:(v^l_h, a_{h})\in\traj\}$ ($2^{nd}$ summation) that differ only in $v'_{h+1}$ and $v^l_{h+1}$ and all other states are exactly same. For every trajectory in L, there is a trajectory in E with a higher value.

We define a short notation $f^{-}_{h} \coloneqq \mu(v_0) \prod\limits_{i \neq h}^{H-1} p_i(v_{i+1}|a_{i}) \pi^i(a_{i}) $, denotes trajectory distribution ignoring transition at $v_h$. }
&= \!\!\!\!\! \sum_{\traj \in \Gamma:(v^l_h,a'_{h})\in\traj} \!\!\!\!\!\!\!\!\! {f'}^{-}_{h} p_h(v'_{h+1}|a'_h) F((v_0,a_0), \hdots (v^l_h,a'_{h}),(v'_{h+1},a_{h+1}),(v_{h+2},a_{h+2}) \hdots v_H)\\
&- \!\!\!\!\! \sum_{\traj \in \Gamma:(v^l_h,a^l_{h})\in\traj} \!\!\!\!\!\!\!\!\! {f^l}^{-}_{h} p_h(v^l_{h+1}|a^l_{h})  F((v_0,a_0), \hdots (v^l_h,a^l_{h}),(v^l_{h+1},a_{h+1}),(v_{h+2},a_{h+2}) \hdots v_H) \numberthis \label{eqn: proof-cancel-out}

\intertext{
Note ${f'}^{-}_{h}$ and ${f^l}^{-}_{h}$ are equal due to state independent transition and policy.

Drop the actions (the function $F$ depends on the states) and let $R \coloneqq \traj \backslash v'$,}
 % only, we can drop actions and denote by $F'(.)$.}
&= \sum_{\traj \in \Gamma:(v^l_h,a'_{h})\in\traj} {f'}^{-}_{h} ( 1 - \epsilon_h - \frac{\epsilon_h}{|\V|-1}) \left( F( R \cup \{v'_{h+1}\}) - F( R)\right) \geq 0   \tag{since, $\epsilon_h \leq \frac{|\V|-1}{|\V|}$}
\end{align*}
The reason for $( 1 - \epsilon_h - \frac{\epsilon_h}{|\V|-1}):$ $(1-\epsilon_h) = p_h(v'_{h+1}|a'_{h})=p_h(v^l_{h+1}|a^l_{h})$ and $\frac{\epsilon_h}{|\V|-1} = p_h(v'_{h+1}|a^l_{h})= p_{h}(v^l_{h+1}|a'_{h})$.
In \cref{eqn: proof-cancel-out}, the two terms (corresponding to set L and E) are subtracted, with probability $(1-\epsilon_h)$ the first term will be larger since $F( R \cup \{v'_{h+1}\}) - F( R) \geq 0$ and with probability $\frac{\epsilon_h}{|\V|-1}$ the second term will be larger since the stochastic transition (e.g., looping action $a^l_h$ can jump to next state $v'_{h+1}$ and $a'_{h}$ stays to the same state $v^l_{h+1}$. This can happen with probability $\frac{\epsilon_h}{|\V|-1}$). In other stochastic transitions, in expectation, the two terms will sum to zero. 

In the above, we have proved the monotonicity of $J(\pi)$ given $F(\cdot)$ is a monotone function. We have,
\begin{align*}
       \frac{\partial J(\pi)}{\partial \pi^{h}(a'_{h})} &= \sum_{\traj} \mu(v_0) ( 1 - \frac{|\V|\epsilon_h}{|\V|-1}) \prod_{h=0}^{H-1} p_h(v_{h+1}|a_{h}) \left[\prod_{h\neq i}^{H-1} \pi^h(a_{h}) \right] \big( F( R \cup \{v'_{h+1}\}) - F( R)\big) \geq 0 
\end{align*}

To obtain the hessian terms, we can follow the same process as above at some horizon $g$ and state $a''$. 
Let $R \coloneqq \traj \backslash (v',v'')$, ($v''$ is the state corresponding to action $a''$),
\begin{align*}
    \frac{\partial^2 J(\pi)}{\partial \pi^g(a''_{g}) \partial \pi^{h}(a'_{h})} &= \sum_{\traj \in \Gamma} f'^{-}_{g,h} ( 1 - \frac{|\V|\epsilon_g}{|\V|-1}) ( 1 - \frac{|\V|\epsilon_h}{|\V|-1}) \\ 
    &\quad\quad\Big( F( R \cup \{v''_{g+1}, v'_{h+1}\}) - F( R \cup \{v'_{h+1}\}) - ( F( R \cup \{v''_{g+1}\})) - F(R)) \Big)\\
    &= \sum_{\traj \in \Gamma} f'^{-}_{g,h} ( 1 - \frac{|\V|\epsilon_g}{|\V|-1}) ( 1 - \frac{|\V|\epsilon_h}{|\V|-1}) \\
    &\quad \quad\bigg( F( \underbrace{R \cup \{v''_{g+1}\}}_{A} \cup \{v'_{h+1}\}) - F( \underbrace{R \cup \{v''_{g+1}\}}_{A}) - ( F( R \cup \{v'_{h+1}\}) - F(R)) \bigg) \\
    &\leq 0 \tag{ By submodularity of $F$, $R \subseteq A$, $\Delta_F(v'|A) \leq \Delta_F(v'|R)$ }
\end{align*}
Hence $J(\pi)$ is monotone DR-submodular.
\end{proof}

\subsection{Bounded Curvature}
\label{apx: proof_curvature}
\restateboundedC
\begin{proof} 
Consider the objectives $J(\pi)$ and $H(\pi)$ defined with submodular reward $F(\traj)$ and its corresponding modular rewards $F_m(\traj) = \sum_{s\in \traj} F(s)$ respectively,
\begin{align*}
    J(\pi) = \sum_{\traj} f(\traj; \pi) F(\traj),~~\mathrm{and}~~H(\pi) = \sum_{\traj} f(\traj; \pi) F_m(\traj).
\end{align*}
For any policy $\pi$, $J(\pi) \geq (1-c) H(\pi)$ using curvature definition. Consider $\nabla_{\pi} J(\pi)$,
\begin{align*}
    \nabla_{\pi} J(\pi) & = \sum_{\traj} \nabla_{\pi} f(\traj; \pi) F(\traj) \\
   & =  \sum_{\traj} \nabla_{\pi} f(\traj; \pi) \left( \sum_{i=0}^{|\traj|-1} F(s_{i+1}|\traj_{0:i}) + F(\{s_0\})\right) \\
   % &\geq \sum_{\traj} \nabla_{\pi} f(\traj; \pi) \left( \sum_{i=0}^{|\traj|-1} (1-c) F(\{s_{i+1}\}) + F(\{s_0\})\right)\\
   &\geq  \sum_{\traj} \nabla_{\pi} f(\traj; \pi) \left( \sum_{i=0}^{|\traj|} (1-c) F(\{s_{i}\})\right) \tag{using curvature definition}\\
   &=  (1-c) \sum_{\traj} \nabla_{\pi} f(\traj; \pi) F_m(\traj) = (1-c) \nabla_{\pi} H(\pi). 
\end{align*}
Similarly, since $F_m(\traj)\geq F(\traj)\geq (1-c)F_m(\traj)$, the following holds component-wise, 
\begin{align}
      \nabla_{\pi} H(\pi)|_{\pi = \pi'} \geq \nabla_{\pi} J(\pi)|_{\pi = \pi'} \geq  (1-c) \nabla_{\pi} H(\pi)|_{\pi = \pi'}  ~~\forall \pi'. \label{eg: gradient_sandwich}
\end{align}
At the convergence of \subPO, the stationary point $\pi$ satisfies, 
\begin{align*}
    &\max_{\pi' \in \Pi}\langle \nabla_{\pi} J(\pi), \pi'-\pi\rangle \leq 0 \\
    \implies &\max_{\pi' \in \Pi}\langle \nabla_{\pi} H(\pi), \pi'-\pi\rangle \leq 0. \tag{using \cref{eg: gradient_sandwich}, $c\neq1$}%\implies \max_{\pi' \in \kappa}\langle Q_\pi, \pi'-\pi\rangle_{\eta_\pi} \leq 0 
\end{align*}
Hence $\pi$ is also a stationary point for modular objective $H(\pi)$. 
Under mild regularity assumptions, any stationary point of the policy gradient cost function with modular rewards is a global optimum \citep{bhandari2019global}. Let $\pi$ be the policy where \subPO converges, then,
    \begin{align}
    J(\pi) = \sum_{\traj} f(\traj; \pi) F(\traj) &\geq (1-c) \sum_{\traj} f(\traj; \pi) F_m(\traj) \label{eq:curv_bound}\\
    &\geq (1-c) \sum_{\traj} f(\traj; \pi^{\star}) F_m(\traj) \label{eq:stationary_point}\\
    &\geq (1-c) \sum_{\traj} f(\traj; \pi^{\star}) F(\traj) = (1-c)J(\pi^{\star}) \label{eq:Fm_geq_F}
        % J(\pi) &\geq (1-c)J(\pi^\star)
    \end{align}
\cref{eq:curv_bound} follows using curvature definition for any policy $\pi$. \cref{eq:stationary_point} follows since $\pi$ is optimal of $H(\pi)$ where as $\pi^{\star}\in \Pi_{\nm}$ is optimal for $J(\pi)$. Finally, \cref{eq:Fm_geq_F} follows since $F_m(\traj) \geq F(\traj)$.
\end{proof}    

\newpage
\section{Experiments}
\label{apx: experiments}

In this work, we examined state-action spaces that are both discrete (finite) and continuous. Here, we provide an overview of the hyperparameters and experiment specifics for each type. 
\subsection{Finite state action spaces}
We consider a grid world environment with domain size $30\times30$, having a total of 900 states. Horizon $H=40$, i.e., the agent has to find a path of length 40 that jointly maximize the objective.
The action set comprised of $\{right, up, left, down, stay\}$. The agent's policy was parameterized by a two-layer multi-layer perceptron, consisting of 64 neurons in each layer. The non-linearity in the network was induced by employing the Rectified Linear Unit (ReLU) activation function.
By employing a stochastic policy, the agent generated a categorical distribution over the action set for each state. Subsequently, this distribution was passed through a softmax probability function. We employed a batch size of $B=500$ and a low entropy coefficient of $\alpha=0$ or $0.005$, depending on the specific characteristics of the environment.

We randomly generated ten different environments and conducted 20 runs for each environment, resulting in a total of 200 experiments. These experiments were executed concurrently on a server with 400 cores, allocating two cores for each job. It takes less than 20 minutes to complete 150 epochs and obtain a saturated policy, indicating no further improvement. All our plot shows the training curve (objective vs epochs).

For instances where we utilized randomly sampled environments, such as coverage with GP samples, gorilla nest density, or item collection environment, we have included the corresponding environment files in the attached code for easy reference.

To conduct additional analysis using the \emph{Informative Path Planning Environment}, we made modifications to the underlying function previously based on gorilla nest density. Specifically, we introduced three variations: i) a constant function, ii) a bimodal distribution, and iii) a multi-modal distribution randomly sampled from a Gaussian Process (GP). As depicted in Figure \ref{fig: extra_empirical}, we noticed a consistent trend across all three variations. The \mrl algorithm repeatedly maximized its modular reward but became trapped in high-density regions. In contrast, the \subPOm algorithm demonstrated performance comparable to \subPOnm while exhibiting greater sample efficiency.

\begin{figure}
	\hspace{-4.00mm}
	\centering
    \begin{subfigure}[t]{0.31\columnwidth}
  	\centering
  	\includegraphics[scale=0.75]{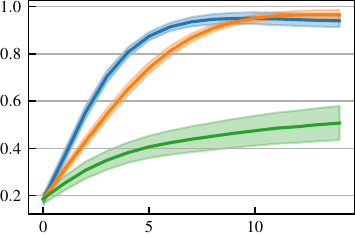}
    \caption{Constant weights}
    \label{fig: flat_H40}
    \end{subfigure}
    % \hspace{-6.00mm}
~    
    \begin{subfigure}[t]{0.31\columnwidth}
  	\centering
  	\includegraphics[scale=0.75]{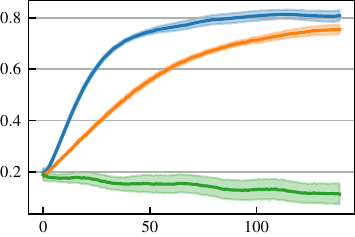}
    \caption{Bimodal distribution}
    \label{fig: Bimodal_H40}
    \end{subfigure}
~
    \begin{subfigure}[t]{0.31\columnwidth}
  	\centering
  	\includegraphics[scale=0.75]{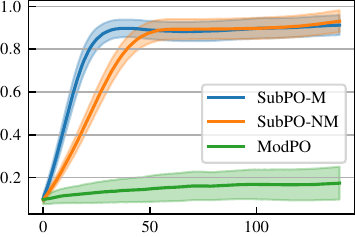}
    \caption{Gaussian Processes}
    \label{fig: GP_H40}
    \end{subfigure}
    \hspace{-4.00mm}
\caption{Comparison of \subPOm, \subPOnm and \mrl. We observe that \mrl gets stuck by repeatedly maximizing its modular reward, whereas \subPOm achieves comparable performance to \subPOnm while being more sample efficient. (Y-axis: normalized $J(\pi)$, X-axis: epochs)}
\label{fig: extra_empirical}
\vspace{-4.00mm}
\end{figure}

\subsection{continuous state-action spaces}
% For continuous domain, constraints are enforced using the Tanh operation. 

\mypar{Car Racing}
\label{appx: ORCADescription}
In the car racing environment, our objective is to achieve the fastest completion of a one-lap race. To accomplish this, we aim to learn a policy that effectively controls a car operating at its handling limits. Our simulation study closely emulates the experimental platform employed at ETH Zurich, which utilizes miniature autonomous race cars. Building upon \citet{Liniger2017OptimizationBasedAR}, we model the dynamics of each car using a dynamic bicycle model augmented with Pacejka tire models \citep{mf}. However, we deviate from the approach presented in  \citep{Liniger2017OptimizationBasedAR} by formulating the dynamics in curvilinear coordinates, where the car's position and orientation are represented relative to a reference path. This coordinate transformation significantly simplifies the reward definition and facilitates policy learning. The state representation of an individual car is denoted as $z = [\rho, d, \mu, V_x, V_y, \psi]^T$. Here, $\rho$ measures the progress along the reference path, $d$ quantifies the deviation from the reference path, $\mu$ characterizes the local heading relative to the reference path, $V_x$ and $V_y$ represent the longitudinal and lateral velocities in the car's frame, respectively, and $\psi$ represents the car's yaw rate. The car's inputs are represented as $[D, \delta]^T$, where $D \in [-1, 1]$ represents the duty cycle input to the electric motor, ranging from full braking at $-1$ to full acceleration at $1$, and $\delta \in [-1, 1]$ corresponds to the steering angle.

The car is equipped with a camera and observes a patch around its state $s$ as $\Discat[s]$. The objective function is $\Obj(\traj) = |\cup_{s\in \traj}\Discat[s]|$. For simplicity, we define the observation patch of some 5 m and spanning the entire width of the race track. We add additive reward penalization to avoid hitting the walls. If a narrow-width patch is used, then one can eliminate the use of reward penalization (used to avoid hitting the boundaries) as coverage near the boundaries is low and the agent learns to drive in the middle or go to the extreme if they gain due to going fast. The test track, depicted in Figure \ref{fig: car_track}, consists of 13 turns with varying curvatures. We utilize an optimized X-Y path as the reference path obtained through a lap time optimization tool. It is worth noting that using a pre-optimized reference path is not obligatory, but we observed improved algorithm convergence when employing this approach. To convert the continuous-time dynamics into a discrete-time Markov Decision Process (MDP), we discretize the dynamics using an RK4 integrator with a sampling time of 0.03 seconds.

For the training, we started the players on the start line ($\rho = 0$) and randomly  assigned $d \in \{0.1, -0.1\}$. We limit one episode to 700 horizon. For each policy gradient step, we generated a batch of 8 game trajectories and ran the training for roughly 6000 epochs until the player consistently finished the lap. This takes roughly 1 hour of training for a single-core CPU. We use Adam optimizer with a slow learning rate of $10^{-3}$. For our experiments, we ran 20 different random runs and reported the mean of all the seeds. As a policy, we use a multi-layer perceptron with two hidden layers, each with 128 neurons and used ReLU activation functions and a Tanh output layer to enforce the input constraints. For variance reduction, we use a baseline $b(s)$ in \cref{eqn: grad_esti} that estimates the cumulative sum of marginal gains. One can think of this as a heuristic to estimate marginal gains.
The same setting was also used for the competing algorithm \mrl.

\mypar{MuJoCo} It is a physics engine for simulating high-dimensional continuous control tasks in robotics. In this experiment, we consider the Ant environment, which is depictive of the core challenges that arise in the Mujoco environment. For detailed information about the Mujoco Ant environment, we refer the reader to \href{https://gymnasium.farama.org/environments/mujoco/ant/}{https://gymnasium.farama.org/environments/mujoco/ant/}.

For the training, we use the default random initialization of Mujoco-Ant. We consider a bounded domain of $[-20,20]^2$. The agent covers a discrete grid of 5x5 around its location in the 2D space (only for efficient reward computation, we discretize the domain into a $400 \times 400$ grid, dynamics is continuous) and receives a reward based on the coverage. To train the agent faster, one can also couple MuJoCo's inbuilt additive rewards (tuned for faster walking) with the submodular coverage rewards, which results in submodular rewards. 

We limit one episode to a horizon of 400. We use a vectorized environment that samples a batch of 15 trajectories at once. We trained for roughly 20,000 epochs until the agent consistently walks and explores the domain. This takes roughly 6 hours of training for a single-core CPU. We use Adam optimizer with a learning rate of $10^{-2}$. For our experiments, we ran 20 different random runs and reported the mean of all the seeds. As a policy, we use a multi-layer perceptron with two hidden layers, each with 128 neurons and used ReLU activation functions and a Tanh output layer to enforce the input constraints. Similar to the car racing environment, we use a baseline $b(s)$ in \cref{eqn: grad_esti} that estimates the cumulative sum of marginal gains.
The same setting was also used for the competing algorithm \mrl.

\end{document}